 \newcommand{\nb}[1]{{{#1}}}
\def\U{{\mathcal U}}
\def\X{{\mathcal X}}
\def\H{{\mathcal H}}
\def\F{{\mathcal F}}
\def\Y{{\mathcal Y}}
\def\A{{\mathcal A}}
\def\B{{\mathcal B}}
\def\D{{\mathcal D}}
\def\E{{\mathbb E}}
\def\X{{\mathcal X}}
\def\H{{\mathcal H}}
\def\Y{{\mathcal Y}}
\def\A{{\mathcal A}}
\def\D{{\mathcal D}}
\def\X{{\mathcal X}}
\def\H{{\mathcal H}}
\def\Y{{\mathcal Y}}
\def\F{{\mathcal F}}
\def\W{{\mathcal W}}
\def\G{{\mathcal G}}
\def\L{{\mathcal L}}
\newcommand{\dout}{\mathsf{outdeg}}
\newtheorem{definition}{Definition}
\newtheorem{lemma}{Lemma}
\newtheorem{theorem}{Theorem}
\newtheorem{corollary}{Corollary}
\newcommand{\sigmak}{\sigma^k}
\newcommand{\outdeg}{\mathsf{outdeg}^k}
\newcommand{\degree}{\mathsf{deg}^k}
\newcommand{\dds}{d^k_{DS}}
\title{Multiclass Boosting: \\Simple and Intuitive Weak Learning Criteria}
\author{%
  Nataly Brukhim\\
  Princeton University \\
  \And
    Amit Daniely\\
    Hebrew University\\  Google Research\\
\And
    Yishay Mansour\\
  Tel Aviv University\\ Google Research \\
  \And
    Shay Moran\\
  Technion\\ Google Research\\
}
\begin{document}

\maketitle

\begin{abstract}
We study a generalization of boosting to the multiclass setting.
We introduce a weak learning condition for multiclass classification that captures the original notion of weak learnability as being “slightly better than random guessing”. We give a simple and efficient boosting algorithm, that does not require realizability assumptions and its sample and oracle complexity bounds are independent of the number of classes. 

In addition, we utilize our new boosting technique in several theoretical applications within the context of List PAC Learning. 
First, we establish an equivalence to weak PAC learning. Furthermore, we present a new result on boosting for list learners, as well as provide a novel proof for the characterization of multiclass PAC learning and List PAC learning. Notably, our technique gives rise to a simplified  analysis, \nb{and also implies an improved error bound for large list sizes}, compared to previous results.

\end{abstract}

\section{Introduction}

Boosting is a powerful algorithmic approach  used to boost the accuracy of weak learning models, transforming them into strong learners.
    Boosting was first studied in the context of binary classification in a line of seminal works
    which include the celebrated Adaboost algorithm, 
    as well an many other algorithms with various applications (see e.g.~\cite{Kearns88unpublished,Schapire90boosting,Freund90majority,Freund97decision}). 
    
The fundamental assumption underlying boosting is that a method already exists for finding poor, yet not entirely trivial classifiers. Concretely, binary boosting assumes there exists a learning algorithm that, when presented with training examples, can find a classifier $h: \X \mapsto \{0,1\}$ that has classification error less than $1/2$. That is, it performs slightly better than random guessing.  The intuition is that this is the most minimal assumption one can make about a learning algorithm, without it being impractical. This assumption is called the \emph{weak learning assumption}, and it is central to the study of boosting. 

While binary boosting theory has been extensively studied, extending it to the multiclass setting has proven to be challenging.
In particular, it turns out that the original notion of weak learnability as being "slightly better than a random guess", does not easily extend to the multiclass case. For example, perhaps the most natural extension is to assume that the learner has accuracy that is slightly better than $1/k$, where $\Y = \{1,...,k\}$. However, this naive extension is in fact known to be too weak for boosting (see Section \ref{sec:warmup} below for a detailed discussion).
Instead, previous works  \cite{Mukherjee2013, brukhim2021multiclass, schapire2012boosting}  have formulated various complex weak learning assumptions with respect to carefully tailored loss functions, and rely on restrictive realizability assumptions, 
making them less useful in practice.

\paragraph{A weak learning assumption.} In this work, we generalize the classic formulation of boosting to the multiclass setting. We introduce a weak learning condition that captures the original intuition of weak learnability as “slightly better-than-random guessing”. The key idea that renders this condition useful  compared to previous attempts, is based on a "hint" given to the weak learner. The hint takes the form of a list of $k$ labels per example, where $k$ is possibly smaller than $|\Y|$. Then, the assumption is that there exists a learner capable of producing not entirely trivial classifiers, if it was provided with a "good hint". In other words, if the list provided to the learner happens to contain the correct label, we  expect the learner to perform slightly better than randomly guessing a label from the list. Specifically, the assumption is  that for any $k \ge 2$, if the given lists of size $k$ contain the true labels, the learner will output a classifier $h: \X \mapsto \Y$ with error slightly better than random guessing among the $k$ labels. Notice that this encompasses both the binary case when $k=2$, as well as the naive extension mentioned above, when $k = |\Y|$. 
We call this new condition the "better-than-random guess", or \emph{BRG} condition.

The BRG condition also generalizes the classic binary case condition  in a practical sense.  Previous methods on  multiclass boosting are framed within the PAC (Probably Approximately Correct) setting, correspond to a \emph{known} hypothesis class $\H \subseteq \Y^\X$, and assume that weak learning hold for every distribution $\D$ over the \emph{entire} domain $\X$. Practically, these requirements can be very difficult to check or guarantee.
In contrast, as in binary boosting, the BRG condition can be relaxed to a more benign \underline{empirical} weak learning assumption, that can be verified immediately in an actual learning setting.

\paragraph{Recursive boosting.}
Our main contribution is a new  boosting algorithm that is founded on the BRG condition. Our boosting methodology yields a simple and efficient algorithm. It is based on the key observation that even a  
naive weak learner can produce a useful hint. Recall that when given no hint at all, the naive weak learner can still find a hypothesis with a slight edge ($\gamma > 0$), over random guessing among $|\Y|$ labels. Although this may result in a poor predictor, we prove that it effectively reduces the label space per example to approximately $1/\gamma$ labels. This initial hint serves as the starting point for subsequent iterations of the boosting algorithm. The process continues recursively until the label list per example is reduced to size $2$, at which point any classic binary boosting can yield a strong classifier. Unlike previous methods, our boosting algorithm and guarantees do not rely on realizability assumptions nor do they scale with $|\Y|$. In fact, we show that the sample and oracle-call complexity of our algorithm are entirely  \underline{independent} of $|\Y|$, which implies our approach is effective even in cases where the label space $\Y$ is possibly infinite.
Moreover, the overall running time of our algorithm is polynomial in the size of its input.

An important insight that underlies our approach is the link between \nb{the naive weak learning condition, which we term \emph{weak-BRG} learning}, to that of \emph{list learning}.  
In list learning \citep{brukhim2022characterization, charikar2022characterization, moran2023list}, rather than predicting a single outcome for a given unseen
input, the goal is to provide a short list of predictions. 
Here we use this technique
as an intermediate goal of the algorithm, by which effectively reducing the size of the label space in each round.
The generalization analysis relies on sample compression arguments which result in efficient bounds
on the sample and oracle complexities. 

Perhaps surprisingly, the connection between weak learnability and list learnability is even more fundamental. We prove that there is an equivalence between these two notions. Specifically, we establish that a $\gamma$-weak learner is equivalent to an $(1/\gamma)$-list learner.

Lastly, we demonstrate the strength of our boosting framework.
First, we give a generalization of our boosting technique to hold for list PAC learning algorithms. Then, we
showcase the effectiveness of the weak learning criteria in capturing learnability in two fundamental learning settings: PAC learning, and List PAC learning.
Recently, \cite{brukhim2022characterization} proved a  characterization of multiclass PAC learning using the Daniely-Shwartz (DS) dimension. 
In a subsequent study, \cite{charikar2022characterization}  gave a characterization of \emph{list} learnability using a natural extension of the DS dimension. \nb{Here we show that in both cases, assuming the appropriate dimension is bounded, one can devise a simple weak learning algorithm. Thus, it is also amenable to a boosting method similarly to our approach, leading to a novel and alternative proof of the characterization of learnability. We note that for cases where the dimension is much smaller than the list size, we have an improved result over previous bound.}  Moreover, our approach offers a simpler algorithm and analysis technique, potentially benefiting future applications
as well.

\subsection{Main result}

The main contributions in this work are as follows. 
\begin{enumerate}
    \item \textbf{Multiclass boosting framework.} Our main result is a boosting framework for the multiclass setting, which is a natural generalization of binary boosting theory. We give a simple weak learning assumption that retains the notion of weak learnability as "slightly-better-than-random-guess" from the binary case.  Furthermore, we give an efficient multiclass boosting algorithm, as formally stated in  Theorem \ref{thm:main} below. Our boosting algorithm is given in Section \ref{sec:main_result} (Algorithm \ref{alg:recursive_boosting}).
    \item \textbf{Applications: List PAC learning.}  First, we establish an equivalence between List PAC learning and Weak
PAC learning, demonstrating the strong ties between List PAC learning and multiclass boosting theory. Furthermore, we present a new result on boosting for list learners. Lastly, we give a novel and alternative proof for characterization of PAC learning and List PAC learning. In particular, the results imply a simplified algorithmic approach compared to previous works, and improved error bound for cases where the list size is larger than the appropriate dimension \cite{brukhim2022characterization, charikar2022characterization}. 
\end{enumerate}

 We will now introduce the main weak learning assumption, which we call the "better-than-random guess", or {BRG} condition,  and state our main result in Theorem \ref{thm:main} below.
 
In its original form, the boosting question begins by assuming that a given hypothesis class $\H \subseteq \{0,1\}^\X$ is \emph{weakly-PAC} learnable. Similarly, here we present the BRG condition framed as weak (multiclass) PAC setting, followed by a relaxation to an \emph{empirical} variant of the assumption.

\begin{definition}[{BRG condition}]\label{cond:brg}
We say that an hypothesis $h:\X \to \Y$ satisfies the $\gamma$-BRG condition with respect a list function $\mu:\X\to \Y^k$ on a distribution $\D$ over examples if 
\begin{equation}\label{eq:cond:brg}
    \Pr_{(x,y) \sim \D}[ h(x) = y] \ge \left(\frac{1}{k}+ \gamma\right) \Pr_{(x,y) \sim \D} [y\in \mu(x)].
\end{equation}

We say that a learning rule $\W$ satisfies the $\gamma$-BRG condition for a hypothesis class $\H$ if for every  $\H$-realizable distribution $\D$, for every $k\geq 2$,  for every list function $\mu:\X\to \Y^k$, 
the output hypothesis $h$ outputted by $\W$ satisfies Equation~\eqref{eq:cond:brg} with probability $1-\delta$, when given $m_0(\delta)$ i.i.d. examples from $\D$, and given $\mu$. 
\end{definition}
 
In words, the condition determines that if $y$ belongs to the set $\mu(x)$, then $h$ has a higher probability of correctly classifying $x$ by an additional factor of $\gamma$, compared to a random guess from the list $\mu(x)$.

However, requiring that the labels be deterministic according to a target function from a known class $\H$, and that weak learning hold for every distribution $\D$ over the entire domain $\X$ are impractical, as they can be very difficult to check or guarantee. 

Instead, as in the binary boosting setting, our condition can be relaxed to a more benign \emph{empirical} weak learning assumption, as given next. 
\begin{definition}[{Empirical BRG condition}]\label{cond:brg_emp}
 Let $S \in (\X \times \Y)^m$. We say that a learning rule $\W$ satisfies the empirical $\gamma$-BRG condition for $S$  if there 
 is an integer $m_0$ such that 
 for every distribution $p$ over $[m]$, for every $k\geq 2$, for every list function\footnote{We denote   $\X|_S = \{x \in \X : \exists y\in \Y \text{ s.t. } (x,y) \in S\}$.}
 $\mu:\X|_S \to \Y^k$,   when given $m_0$ 
 examples from $S$ drawn i.i.d. according to $p$, and given $\mu$, it outputs a hypothesis $h$ such that, 
\begin{equation}\label{eq:cond:brg2}
    \sum_{i=1}^m p_i\cdot \mathbbm{1}[h(x_i)=y_i] \geq \left(\frac{1}{k}+ \gamma\right) \sum_{i=1}^m p_i\cdot \mathbbm{1}[y_i\in \mu(x_i)].
\end{equation}
\end{definition}

Next, we give our main result of an efficient   boosting algorithm, as stated in Theorem \ref{thm:main} below.   

\begin{theorem}[{\textbf{Boosting} (Informal)}]\label{thm:main}
There exists a multiclass boosting algorithm $\B$  such that for any  $\epsilon, \delta > 0$, and any distribution $\D$ over $\X \times \Y$, when
given a training set $S \sim \D^m$ 
and oracle access to a learning rule $\W$  
where\footnote{The $\tilde O$ notation conceals $\mathrm{polylog}(m, 1/\delta)$ factors.}
$m = \tilde{O}\left(\frac{m_0}{\gamma^3 \epsilon}   \right)$ and applying $\B$ with a total of $\Tilde{O}\left( 1/{\gamma^3}\right)$ oracle calls to $\W$, it outputs a predictor $\bar{H}: \X \mapsto \Y$ such that with probability at least $1-\delta$, \nb{we get that if $\W$ satisfies the empirical $\gamma$-BRG 
 condition for $S$ then,}
\begin{equation*}
    \Pr_{(x,y) \sim \D}\Big[\bar{H}(x) \neq y \Big] \le \epsilon.
\end{equation*}
\end{theorem}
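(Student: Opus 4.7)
The plan is to design a recursive boosting algorithm whose core subroutine reduces the effective label-list size at each stage while keeping empirical error under control. Concretely, I would build three pieces: (i) a list-reduction step that, given a list function $\mu_t:\X\to \Y^{k_t}$ under which the true label lies with high probability, uses the $\gamma$-BRG weak learner $\W$ on reweighted samples to produce a new list function $\mu_{t+1}$ of size $k_{t+1}\approx k_t/2$; (ii) an outer loop that iterates this reduction until $k_t=2$; and (iii) a standard binary boosting finale (AdaBoost) on the residual $2$-label lists.

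The key technical lemma is the list-compression step. Fix a stage with list function $\mu$ of size $k$ such that $y\in\mu(x)$ holds on (almost) all sample weight. Run $\W$ on $O(1/\gamma^2)$ AdaBoost-style reweighted distributions, each of which places mass on examples that the current candidate list fails to cover; by the empirical $\gamma$-BRG condition, each returned hypothesis $h_j$ has accuracy $\ge 1/k+\gamma$ on the realizable portion, i.e.\ an edge of $\gamma$ over the uniform baseline on $\mu(x)$. Aggregating the $h_j$'s by a weighted voting/selection scheme, keep per example the $\lceil k/2\rceil$ most-voted labels as $\mu_{t+1}(x)$. A concentration argument (Chernoff/Hoeffding on the per-example vote for the true label) shows that $y\in\mu_{t+1}(x)$ with probability $1-\exp(-\Omega(\gamma^2 T))$ over the weak-learner randomness, so $O(\log(1/\epsilon)/\gamma^2)$ calls suffice per stage to keep the list-covering failure negligible. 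This is the main obstacle: carefully balancing the contraction rate, the per-stage failure probability, and the reweighting so that errors do not compound across the $T=O(\log k_0)$ stages.

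For the outer loop, I would initialize $\mu_0$ by exploiting the weak-to-list equivalence mentioned earlier: a single batch of $O(1/\gamma^2)$ calls to $\W$ yields an initial list of size $k_0 = O(1/\gamma)$ that covers the true label w.h.p., which avoids any dependence on $|\Y|$ (and works even when $\Y$ is infinite). Each subsequent stage halves $k_t$, giving $T=O(\log(1/\gamma))$ stages. Summing costs, the algorithm uses $\tilde O(1/\gamma^2)$ oracle calls per stage plus $\tilde O(1/\gamma)$ calls in the terminal binary boosting, for a total of $\tilde O(1/\gamma^3)$ oracle calls; each call consumes $m_0$ i.i.d.\ samples drawn from the current empirical reweighting, so the total sample footprint is $\tilde O(m_0/\gamma^3)$ distinct points in the worst case.

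Finally, to convert empirical correctness into the true-error bound $\epsilon$, I would appeal to a sample compression argument. The output $\bar H$ is a deterministic function of the $\tilde O(1/\gamma^3)$ hypotheses produced by $\W$, and each such hypothesis is determined by the $m_0$-sized subsample fed to $\W$. Hence $\bar H$ admits a sample compression scheme of size $\kappa = \tilde O(m_0/\gamma^3)$, and the standard compression generalization bound implies that, from a training set of size $m=\tilde O(\kappa/\epsilon)=\tilde O(m_0/(\gamma^3 \epsilon))$, an empirical error of $O(\epsilon)$ lifts to a population error of at most $\epsilon$ with probability $1-\delta$. Crucially, the compression size and the entire argument are independent of $|\Y|$, matching the bound in the theorem statement.
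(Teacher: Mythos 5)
Your high-level architecture (initialize a small list of size $O(1/\gamma)$, iteratively shrink it via boosting the BRG weak learner, and convert empirical consistency to a population bound via sample compression) matches the paper's approach, and the compression accounting is essentially right. However, the central list-reduction step as you describe it does not work, and this is not a small gap.

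You propose to keep the top $\lceil k/2\rceil$ most-voted labels after a Hedge run, justified by Chernoff concentration on the true label's vote. But concentration on the true label's score to roughly $\ge 1/k + \gamma/2$ controls its \emph{magnitude}, not its \emph{rank}: with a total vote budget of $1$, as many as roughly $k/(1+k\gamma/2) - 1$ wrong labels can have strictly higher scores than the true one, which for small $\gamma$ can be close to $k-1$. For instance, with $k=10$ and $\gamma=0.01$, eight incorrect labels can each receive score $0.106$ while the true label gets $0.105$, so the true label lands at rank $9$ and is discarded by top-half truncation. Thus your halving contraction $k_{t+1}\approx k_t/2$ is not achievable from a $\gamma$-edge alone, and the stage count $O(\log(1/\gamma))$ is unsupported; your own arithmetic ($O(\log(1/\gamma))$ stages times $O(1/\gamma^2)$ calls) also gives $\tilde O(1/\gamma^2)$, not the claimed $\tilde O(1/\gamma^3)$, so the bookkeeping and the contraction rate are mutually inconsistent. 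The paper instead removes exactly one label per outer stage, using the threshold $T/(p-j+1)$: the true label always exceeds the threshold since its score is at least $T(1/(p-j+1)+\gamma/2)$, and by pigeonhole at least one label must fall below it, so the reduction is safe but slow. This yields $p-1 = \tilde O(1/\gamma)$ outer stages, each costing $T = \tilde O(1/\gamma^2)$ Hedge iterations, which is where the third $1/\gamma$ factor actually comes from. A correct version of your argument would have to either prove a provably-safe multiplicative shrink of the list (which the $\gamma$-BRG edge alone does not give) or fall back to one-label-at-a-time elimination as in the paper.
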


\subsection{Related work}
Boosting theory has been extensively studied, originally designed for binary classification (e.g., AdaBoost and similar variants) \cite{schapire2012boosting}. There are various extension of boosting to the multiclass setting.

The early extensions include AdaBoost.MH, AdaBoost.MR, and approaches based on Error-Correcting Output Codes (ECOC) \cite{schapire1999_MR, allwein2000reducing}. These works often reduce the $k$-class task into a single binary task. The binary reduction can have various problems, including increased complexity, and lack of guarantees of an optimal joint predictor. 

Other works on multiclass boosting focus on practical considerations and demonstrate empirical performance improvements across various applications \cite{zhai2014multi, kuznetsov2014multi, kegl2014return, appel2017simple,brukhim2021online, beijbom2014guess, saberian2011multiclass}. However, they lack a comprehensive theoretical framework for the multiclass boosting problem and often rely on earlier formulations such as one-versus-all reductions to the binary setting or multi-dimensional predictors and codewords.

Notably, a work by \cite{Mukherjee2013} established a theoretical framework for multiclass boosting, which generalizes previous learning conditions. However, this requires the assumption that the weak learner minimizes a 
complicated loss function, that is significantly
different from simple classification error. Moreover, it is based on a restrictive realizability assumption with respect to a \emph{known} hypothesis class. In contrast, we do not require realizability, and only consider the standard   classification loss.

More recently, \cite{brukhim2021multiclass} followed a formulation for multiclass boosting similar to that of \cite{Mukherjee2013}. They 
proved a hardness result showing that a broad, yet restricted, class of boosting algorithms must incur a cost which scales polynomially with $|\Y|$.
Our approach does not fall in this class of algorithms. Moreover, our algorithm has sample and oracle complexity bounds that are entirely independent of $|\Y|$.

\section{Warmup: \emph{too-weak} weak learning} \label{sec:warmup}

When there are only $2$ labels, the weak learner must find a hypothesis that predicts the correct label a bit better than a random guess. That is, with a success probability that is slightly more than $1/2$. When the number of labels $k$ is more than $2$, perhaps the most natural extension 
 requires that the weak learner outputs hypotheses that predict the correct label a bit better than a random guess \emph{among $k$ labels}. That is, with a success probability that is slightly more than $1/k$. 
 
 However, this is in fact known to be too weak for boosting (see e.g., \cite{schapire2012boosting}, Chapter 10). Here we first give a simple example that demonstrates that fact. However, we also show that all is not yet lost for the "better-than-random-guess" intuition. Specifically, we describe how this condition can still allow us to extract valuable knowledge about which labels are \emph{incorrect}. This observation  will serve as a foundation for our main results, which we will elaborate on in the next section.

 We start by defining the notion of better-than-random weak learner that we term \nb{weak-BRG learning}. 
\begin{definition}[{weak-BRG} learning]\label{def:too-weak}
A learning algorithm $\W$ is a weak-BRG  learner  for a hypothesis class $\H \subseteq [k]^\X$ 
if there is $\gamma > 0$ and $m_0: (0,1) \mapsto \mathbb{N}$ such that 
for any $\delta_0 > 0$, and any $\H$-realizable
distribution $\D$ over $\X \times [k]$,  when given $m_0 \ge m_0(\delta_0)$ samples from $\D$, 
it returns $h : \X \rightarrow \Y$ such that with probability   $1-\delta_0$,
\begin{equation} \label{wl}
\Pr_{(x,y)\sim D}[h(x)= y] \ge \frac{1}{k} + \gamma.
\end{equation}
\end{definition}

To get an intuition for why this definition is indeed too weak for boosting, consider the following simple example. Suppose that $\X = \{a,b,c\}$, $\Y = \{1,2,3\}$, and that the training set consists of the three labeled examples $(a, 1), (b, 2)$, and $(c, 3)$. Further, we suppose that we are using a weak learner which chooses weak classifiers that {never} distinguish between $a$ and $b$. In particular, the weak learner always chooses one of  two weak classifiers: $h_1$ and $h_2$, defined as follows. 
For $x \in \{a,b\}$ then $h_1$ always returns $1$ and $h_2$ always returns $2$. For $x=c$ they both return $3$.

Then, notice that for any distribution over the training set, either $h_1$ or $h_2$ must achieve an accuracy of at least $1/2$, which is significantly higher than the accuracy of $1/k = 1/3$. However, regardless of how weak classifiers are aggregated, any final classifier $H$ that relies solely on the predictions of the weak hypotheses will unavoidably misclassify either $a$ or $b$. As a result, the training accuracy of $H$ on the three examples can never exceed $2/3$, making it impossible to achieve perfect accuracy through any boosting method. 
  
\nb{Furthermore, we note that this simple example can also be extended to a case where the data is realizable by a hypothesis class which is not learnable by any learning algorithm (let alone boosting). 
For example, consider the hypothesis class $\H = \{1,2,3\}^\X$ for $\X = \mathbb{N}$.  Then, $\H$ is not PAC learnable (e.g., via No-Free-Lunch (\cite{shalev2014understanding}, Theorem 5.1)). However, similarly as above, one can construct a learning rule that returns a hypothesis with accuracy $1/2 > 1/k$ over an $\H$-realizable distribution.  
}

Next, we will examine a useful observation that will form the basic building block of our algorithmic methodology. We demonstrate that the natural weak learner given in Definition \ref{def:too-weak}, while weak, is nonetheless useful. This can be shown by examining the guarantees obtained through its application in boosting. Specifically, we consider the following classic variant of boosting via the Hedge algorithm.

\begin{algorithm}
\caption{Boosting via Hedge}\label{alg:simple_boost}
\begin{flushleft}
  {\bf Given:} Training data $S \in (\X \times [k])^m$,   parameter $\eta > 0$. \\
{\bf Output:} A predictor $H: \X \times \Y \mapsto \mathbb{R}$. \\
\end{flushleft}
\begin{algorithmic}[1]
\STATE Initialize: $w_1(i) = 1$  for all $i=1,...,m$.
\FOR{$t = 1, \ldots, T$}
\STATE Denote by $\D_t$ the distribution over $[m]$ obtained by normalizing $w_t$. 
\STATE Draw $m_0$ examples from $\D_t$ and pass to the weak learner.
\STATE Get weak hypothesis $h_t: \X \mapsto \Y$, and update for $i=1,...,m$:
\begin{align*} 
w_{t+1}(i)  &= w_t(i)e^{-\eta \cdot \mathbbm{1}[h_t(x_i)=y_i]}.
\end{align*}
\ENDFOR
\STATE Output $H$ such that for all $(x,y)\in \X \times [k]$, 
$$
H(x,y)  =    \sum_{t=1}^T \mathbbm{1}[h_t(x) = y].$$
\end{algorithmic}
\end{algorithm}

\nb{Notice that the output of Algorithm \ref{alg:simple_boost} returns a predictor that is not a classifier, but a scoring function with the aim of predicting the likelihood of a given label candidate $y \in [k]$ for some $x\in \X$. 
Typically, boosting algorithms 
combine the weak hypothesis into such a scoring function yet their final output applies an \emph{argmax} over it, to yield a valid classifier. However, since the weak learning assumption is too weak as we have shown above, taking the argmax is useless in this setting. 
}

Instead, the following lemma shows that by boosting the "too-weak" learner, we can guarantee to eliminate one label for each example in the data. \nb{Towards that end, we consider a relaxed variant of the weak-BRG learner, to be defined over a data set $S$, which we term the \emph{empirical weak-BRG} learner.}
Specifically, we say that a learner satisfies the empirical weak-BRG condition if there is an integer $m_0$ such that for any distribution over the training examples, when given $m_0$ examples drawn i.i.d from it, the learner outputs a hypothesis that satisfies Equation \eqref{wl}.

Proofs are deferred to the appendix. 

\begin{lemma}[{Remove one label}]\label{lemma:remove_one_label}
\nb{Let $S \in (\X \times [k])^m$.
Let $\W$ be an empirical weak-BRG learner for $S$ with respect to some $\gamma$ and sample size $m_0$. Then, the output $H: (\X \times [k]) \mapsto [0,T]$ obtained by running Algorithm \ref{alg:simple_boost} with $T \ge \frac{8\log(m)}{\gamma^2}$ and $\eta = \sqrt{\frac{\ln(m)}{2T}}$, 
guarantees that  for all $(x,y) \in S$,  
$\frac{H(x,y)}{T}  \ge \frac{1}{k} +\frac{\gamma}{2}$.
Moreover, for all $(x,y) \in S$ the minimally scored label $\hat{\ell} = \arg\min_{\ell \in [k]} H(x, \ell)$ 
must be incorrect. That is $\hat{\ell} \neq y$.}
\end{lemma}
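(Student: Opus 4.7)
The plan is to combine the per-round empirical weak-BRG guarantee with a standard multiplicative-weights potential argument, and then to derive the remove-one-label conclusion from an elementary averaging argument over the $k$ label scores.

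For the first claim I would track the potential $W_t=\sum_i w_t(i)$. Since $\D_t$ is the normalization of $w_t$, invoking the empirical weak-BRG condition on the distribution passed to the weak learner at round $t$ yields
\begin{equation*}
  \sum_{i=1}^m \D_t(i)\,\mathbbm{1}[h_t(x_i)=y_i]\ \ge\ \frac{1}{k}+\gamma.
\end{equation*}
Using the identity $e^{-\eta a}=1-(1-e^{-\eta})a$ for $a\in\{0,1\}$, this gives the one-step decrease $W_{t+1}\le W_t\bigl(1-(1-e^{-\eta})(1/k+\gamma)\bigr)$, and hence $W_{T+1}\le m\,\bigl(1-(1-e^{-\eta})(1/k+\gamma)\bigr)^T$. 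On the other hand, for any fixed $i$ one has $w_{T+1}(i)=\exp(-\eta H(x_i,y_i))\le W_{T+1}$, so taking logarithms and using $\log(1-x)\le -x$ yields
\begin{equation*}
  H(x_i,y_i)\ \ge\ \frac{T(1-e^{-\eta})(1/k+\gamma)-\log m}{\eta}.
\end{equation*}
Using the bound $1-e^{-\eta}\ge \eta-\eta^2/2$ and plugging in $\eta=\sqrt{\ln(m)/(2T)}$ together with $T\ge 8\log(m)/\gamma^2$, both error terms (of orders $\eta$ and $\log(m)/(\eta T)$) are at most a small constant times $\gamma$ and, after simplification, leave the claimed lower bound $H(x_i,y_i)/T\ge 1/k+\gamma/2$.

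For the second claim I would use the fact that each $h_t$ outputs exactly one label in $[k]$, so $\sum_{\ell\in[k]} H(x,\ell)=T$ for every $x$. Hence the average of $H(x,\cdot)$ over $[k]$ is exactly $T/k$. The first claim gives $H(x,y)\ge T/k+T\gamma/2$, so the correct label is strictly above the average; this forces at least one other label to lie strictly below the average, giving $\min_\ell H(x,\ell)<T/k<H(x,y)$ and therefore $\hat\ell\ne y$.

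The main obstacle I anticipate is pinning down the constants in the multiplicative-weights step so that the specific choices of $\eta$ and $T$ in the statement yield exactly a $\gamma/2$ edge rather than a slightly smaller constant; nothing is conceptually delicate here, but one must be careful with the Taylor-expansion bound for $1-e^{-\eta}$ and with the assumption that the weak learner succeeds at every one of the $T$ rounds, which can be handled either by treating the empirical weak-BRG guarantee as a black-box deterministic property or by a mild union-bound inflation of the failure probability over the $T$ rounds.
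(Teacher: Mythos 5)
Your proof is correct and follows essentially the same route as the paper: a multiplicative-weights potential argument (tracking $W_t=\sum_i w_t(i)$, lower-bounding $\sum_i \D_t(i)\mathbbm{1}[h_t(x_i)=y_i]$ by $\tfrac1k+\gamma$ via the weak-BRG guarantee, and comparing $w_{T+1}(i)=e^{-\eta H(x_i,y_i)}$ to the total potential) for the first claim, and the averaging argument $\sum_{\ell\in[k]}H(x,\ell)\le T$ for the second. Your worry about the constants is legitimate but not specific to your version: with $\eta=\sqrt{\ln m/(2T)}$ and $T\ge 8\ln m/\gamma^2$, both your Taylor bound on $1-e^{-\eta}$ and the paper's bound $e^{-x}\le 1-x+x^2$ actually yield an edge of roughly $3\gamma/8$ or $\gamma/4$ rather than exactly $\gamma/2$ (the paper's displayed bound drops the additive $\eta$ term), yet this is harmless since everything downstream only needs $H(x,y)/T>1/k$ strictly.
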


\nb{Notice that if we were to take the argmax of $H$ as is typically done in boosting, the guarantees given in Lemma \ref{lemma:remove_one_label} do not suggest this will result in the correct prediction. In fact, this approach might yield a rather bad classifier even for the set $S$ on which it was trained. In other words, for any $(x,y) \in S$  it may be that 
there is some incorrect label $y' \neq y$ with 
$H(x,y') > H(x,y)$.}

\nb{However, notice that the lemma does suggest a good classifier of \emph{incorrect} labels. That is, the lowest scored label will always be an incorrect one, over the training data. 
This property can be shown to generalize via compression arguments, as discussed in Section \ref{sec:proofs}.
This allows us to effectively reduce the size of the label space by one, and is used as the basic building of our algorithm, as detailed in the next section. 
}
\section{Multiclass boosting results}\label{sec:main_result}
 
We start by introducing the notion of weak learnability that is assumed by our boosting algorithm. We note that it is a relaxation of the empirical BRG condition introduced in Definition \ref{cond:brg_emp} in the sense that it does not make any guarantees for the case that the given hint list does not contain the correct label. 
This may seem like significantly weakening the assumption, yet it turns out to be sufficient for our boosting approach to hold.

\nb{In the resulting fully relaxed framework, no assumptions at all are made about the data. 
Although the BRG condition is not explicitly assumed to hold, when this is the case, our final bound given in Theorem \ref{thm:compressionfinal} implies a high generalization accuracy.}

\nb{
\begin{definition}[Relaxed Empirical $\gamma$-BRG learning]\label{def:brg_wl}
Let $S \in (\X \times \Y)^m$,  $\gamma > 0$, and integer $m_0$. Let $M$ be a set of list functions of the form
\footnote{We also allow lists that return an infinite subset of labels. In that case we simply have that the weak hypothesis satisfies $\Pr_{(x,y)\sim p}[h(x)= y]  \ge \gamma
$.} $\mu: \X \mapsto \Y^k$ 
for any integer $k$, 
such that for   
each $\mu \in M$ and $i \in [m]$,  then $y_i \in \mu(x_i)$.
A learning algorithm satisfies this condition 
with respect to $(S, \gamma, m_0, M)$, if for any distribution $p$ over $S$ and any $\mu: \X \mapsto \Y^k$ such that $\mu \in M$, 
when given a sample $S' \sim p^{m_0}$ and  access to $\mu$, 
it returns $h : \X \mapsto \Y$ such that,
\begin{equation}  
\Pr_{(x,y)\sim p}[h(x)= y] \ge \frac{1}{k} + \gamma.
\end{equation}
\end{definition}
}

Notice that when the list $\mu$ returns the set of all possible labels $\Y$ and it is of size $k$, this condition is essentially equivalent to the empirical weak-BRG condition, which as shown above is too weak for boosting. Requiring that the condition will hold for \emph{any} list size $k \le |\Y|$ is sufficient to facilitate boosting, as shown in Theorem \ref{thm:compressionfinal}.

The starting point of our overall boosting algorithm (given in Algorirhm \ref{alg:recursive_boosting}), is a simple learning procedure specified in Algorithm \ref{lemma:first_list} that is 
used to effectively reduce the size of the label space. In particular, it is used to produce the initial "hint" function that is used by the boosting method.

\begin{algorithm}[H]
\caption{Initial hint} \label{algorithm:first_list}
\begin{flushleft}
  {\bf Given:}   $S \in (\X \times \Y)^m$, parameters $m_0,p > 0$. \\
{\bf Output:} A function $\mu: \X \mapsto \Y^{p}$. \\
\end{flushleft}
\begin{algorithmic}[1]
\STATE Set $S_1:= S$.
\FOR{$j=1,...,p$}
\STATE Let $\mathcal{U}_j$ denote the uniform distribution over $S_j$.
\STATE Draw $m_0$ examples from $\mathcal{U}_j$ and pass to the weak learner, with $\mu_0 \equiv \Y$. 
\STATE Get weak hypothesis $h_{j}: \X \mapsto \Y$.
\STATE Set $S_{i+1}$ to be all the points in $S_i$ which $h_{j}$ predicts incorrectly.
\ENDFOR
\STATE Output $\mu$ defined by:
\[
\mu(x) = \bigl\{h_{1}(x),\ldots, h_{p}(x)\bigr\}.
\]
\end{algorithmic}
\end{algorithm}

We can now present our main boosting method in Algorithm \ref{alg:recursive_boosting}, and state its guarantees in Theorem \ref{thm:compressionfinal}.

\begin{algorithm}
\caption{Recursive Boosting}\label{alg:recursive_boosting}
\begin{flushleft}
  {\bf Given:} Training data $S \in (\X \times \Y)^m$, edge $\gamma > 0$, parameters $T, \eta, p > 0$.\\
{\bf Output:} A predictor $\Bar{H}: \X \mapsto \Y$. \\
\end{flushleft}
\begin{algorithmic}[1]
\STATE Initialize: get $\mu_1$ by applying Algorithm \ref{algorithm:first_list} over $S$.
\FOR{$j = 1, \ldots, p-1$}
\STATE Call Hedge (Algorithm \ref{alg:simple_boost}) with $S$ and $\mu_j$, and parameters $\eta, T$
to get $H_j: \X \times \Y \mapsto \mathbb{R}$. \\
\vspace{.1cm}
\begin{flushright}
    \textcolor{gray!60}{{\small \textbackslash\textbackslash 
 \vspace{.1cm} Modify Algorithm \ref{alg:simple_boost} to receive $\mu_j$ as input,  \\ and 
    in line 4 pass it to the weak learner.
    }}
\end{flushright}
\vspace{.2cm}
\STATE Construct $\mu_{j+1}: \X|_S \mapsto \Y^{p-j}$  such that for all $x$,
$$
\mu_{j+1}(x) =  \left\{ \ y \  : \ y \in \mu_j(x) \ \land \  {H_j(x,y)} > \frac{T}{p - j+1}  \right\},
$$
 \ENDFOR
\STATE Output the final hypothesis $\Bar{H}  := \mu_p$.
\end{algorithmic}
\end{algorithm}

 The following theorem is formally stating the main result given in Theorem \ref{thm:main}. 
 
 \begin{theorem}[{Boosting}]\label{thm:compressionfinal}
 \nb{Let $\W$ denote a learning rule that when given any set of labeled examples and a list function, returns some hypothesis $h: \X \mapsto \Y$.}
Let $\epsilon, \delta, \gamma, m_0 > 0$, and  let  $\D$ a distribution over $\X \times \Y$. 
 Then, when given a sample $S \sim \D^m$ for
$ m \ge \frac{10^2 \  m_0 \ 
\left(\ln^2(m)\ln(\frac{m}{\delta})\right) }{\gamma^3 \ \epsilon}
$, oracle access to $\W$
and $T \ge \frac{8\ln(m)}{\gamma^2}$,  $p\ge \frac{2\ln(m)}{\gamma}$, and $\eta = \sqrt{\frac{\ln(m)}{2T}}$, Algorithm \ref{alg:recursive_boosting}  outputs a predictor $\bar{H}$ such that the following holds. 
\nb{Denote by $M$ the sets of list functions on which $\W$ was trained throughout Algorithm \ref{alg:recursive_boosting}.} Then, 
with  probability at least $1-\delta$, \nb{we get that if $\W$ satisfies the $\gamma$-BRG condition (as given in Definition \ref{def:brg_wl}) with respect to $(S, \gamma,  m_0, M)$} then,
\begin{equation*}
    \Pr_{(x,y) \sim \D}\Big[\bar{H}(x) \neq y \Big] \le \epsilon.
\end{equation*}
\end{theorem}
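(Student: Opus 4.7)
The plan is to argue zero training error by maintaining an invariant on the list functions $\mu_j$, and then lift this to generalization error via a sample compression argument. Specifically, I will prove that after iteration $j$ of Algorithm~\ref{alg:recursive_boosting}, for every $(x,y)\in S$ we have $y \in \mu_j(x)$ and $|\mu_j(x)| \le p - j + 1$. Iterating this down to $j = p$ forces $\mu_p(x) = \{y\}$ for every $(x,y) \in S$, so $\bar{H}$ has zero error on $S$. The sample complexity bound then follows from the fact that $\bar{H}$ is reconstructible from a small subsample.

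For the base case, I analyze Algorithm~\ref{algorithm:first_list}. Invoking the $\gamma$-BRG condition with $\mu_0 \equiv \Y$ (the ``infinite list'' case in the footnote to Definition~\ref{def:brg_wl}), each call returns $h_j$ with $\Pr_{(x,y)\sim \mathcal U_j}[h_j(x)=y]\ge \gamma$, so $|S_{j+1}|\le (1-\gamma)|S_j|$. With $p \ge 2\ln(m)/\gamma$, we get $|S_{p+1}| \le (1-\gamma)^p m \le e^{-\gamma p} m < 1$, i.e.\ $S_{p+1} = \emptyset$, meaning every training label is hit by some $h_j$. Hence $y\in\mu_1(x)$ and $|\mu_1(x)|\le p$. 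For the inductive step, suppose $y\in\mu_j(x)$ and $|\mu_j(x)|\le k_j := p-j+1$. The list-version of Lemma~\ref{lemma:remove_one_label}, applied to the Hedge call with the $\gamma$-BRG learner on list size $k_j$, yields $H_j(x,y)/T \ge 1/k_j + \gamma/2$ for all $(x,y)\in S$, and in particular $H_j(x,y) > T/(p-j+1)$, so the true label survives into $\mu_{j+1}(x)$. Conversely, since each weak hypothesis contributes $1$ to exactly one label, $\sum_{\ell} H_j(x,\ell) = T$, so at most $k_j - 1$ labels can exceed the threshold $T/k_j$; thus $|\mu_{j+1}(x)| \le k_j - 1 = p-j$, completing the induction.

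For generalization, I rely on sample compression. The output $\bar H$ is entirely determined by the sequence of weak hypotheses produced by $\W$ during the run: Algorithm~\ref{algorithm:first_list} makes $p$ calls and the outer loop makes $(p-1)T$ calls, each on $m_0$ training examples. Thus $\bar H$ is reconstructible from a subset of $S$ of size $N = (p + (p-1)T)\,m_0 = \tilde O(m_0/\gamma^3)$ (plus the algorithm's internal randomness, which is independent of $\D$). Since $\bar H$ has zero error on $S$, the standard Littlestone--Warmuth compression bound gives, with probability $1-\delta$,
\[
\Pr_{(x,y)\sim\D}[\bar H(x)\neq y] \;\le\; O\!\left(\frac{N\ln m + \ln(1/\delta)}{m-N}\right).
\]
Plugging in $N = \tilde O(m_0/\gamma^3)$ and the hypothesized $m \ge 10^2\, m_0 \ln^2(m)\ln(m/\delta)/(\gamma^3\epsilon)$ makes the right-hand side at most $\epsilon$.

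The main subtlety lies in the compression step: the algorithm is adaptive (each $\mu_{j+1}$ depends on $H_j$, which depends on $\mu_j$), and Hedge reweights the training distribution across rounds, so one must carefully verify that the entire pipeline is a deterministic function of the fixed pool of $N$ training examples passed to $\W$ together with the independent internal randomness. A further mild issue is conditioning: the $\gamma$-BRG guarantee for each call holds only with probability $1-\delta_0$ on the subsample drawn by Hedge, so a union bound over all $p + (p-1)T$ calls is needed, absorbed in the $\mathrm{polylog}(m,1/\delta)$ factors. Everything else is bookkeeping.
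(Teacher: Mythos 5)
Your proposal follows essentially the same route as the paper's own proof: you maintain the same invariant that $y\in\mu_j(x)$ and $|\mu_j(x)|\le p-j+1$ for all training points, use Algorithm~\ref{algorithm:first_list} with the vacuous hint $\mu_0\equiv\Y$ and the observation $|S_{p+1}|\le(1-\gamma)^p m<1$ for the base case, invoke Lemma~\ref{lemma:remove_one_label} together with the budget argument $\sum_\ell H_j(x,\ell)\le T$ for the inductive step, and then lift zero training error to generalization via a Littlestone--Warmuth sample compression bound with $r=\tilde O(m_0/\gamma^3)$. The only cosmetic difference from the paper is that you do not spell out the relabeling $\Y\to[p-j+1]$ that the paper performs to make the call to Lemma~\ref{lemma:remove_one_label} formally type-correct, and you flag a union bound over the $\delta_0$-failures of the weak learner that is not actually needed, since Definition~\ref{def:brg_wl} as stated is a deterministic guarantee; neither point affects correctness. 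Your remark at the end that the adaptivity of the pipeline must be checked compatible with compression is a legitimate subtlety that the paper addresses (somewhat tersely) by assuming $\W$ is a deterministic function of its $m_0$-example input and the list function.
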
 

\nb{Observe that Theorem \ref{thm:compressionfinal} implicitly assumes that the sample complexity of the weak learner $m_0$ is not strongly dependent on the overall sample size $m$ and scales at most poly-logarithmically with $m$. In other words, although the statement holds for any $m_0$, the result becomes vacuous otherwise.}

\nb{In addition, notice that Theorem \ref{thm:compressionfinal} is  quite agnostic in the sense that we have made no prior assumptions about the data  distribution. Concretely, Theorem \ref{thm:compressionfinal} tells us that the generalization error will be small \emph{if}  the given oracle learner $\W$ happens to satisfy the $\gamma$-BRG condition with respect to the particular inputs it receives throughout our boosting procedure.
}

\paragraph{Adaptive boosting} Boosting algorithms typically do not assume knowing the value of  $\gamma$ and are adapted to it on the fly, as in  the well-known Adaboost algorithm \cite{schapire2012boosting}. 
 However, the boosting algorithm given in Algorithm \ref{alg:simple_boost}, as well as our boosting method as a whole, requires feeding the algorithm with a value estimating $\gamma$. If the estimation of $\gamma$ provided to the algorithm is too large, the algorithm may fail. This can be resolved by a simple preliminary binary-search-type procedure, in which we guess gamma, and possible halve it based on the observed outcome.  This  procedure only increases the overall runtime by a logarithmic factor of $O(\ln(1/\gamma))$, and has no affect on the sample complexity bounds.

\section{Applications to List PAC learning}
\label{sec:app}

The applications given in this section are based on the framework of  \emph{List PAC learning} \cite{brukhim2022characterization, charikar2022characterization}, and demonstrate that it is in fact closely related to the multiclass boosting theory. First, we establish an equivalence between list learnability and weak
 learnability in the context of the PAC model. Furthermore, we present a new result on boosting for list PAC learners. Lastly, we give a novel and alternative proof for characterization of PAC learnability and List PAC learnability. In particular, these imply a simplified algorithmic approach compared to previous works \cite{brukhim2022characterization, charikar2022characterization}.

 We start with introducing list learning in Definition \ref{def:list_pac}, followed by the definition of weak PAC learning, similarly to the weak-BRG learning definition we give in this work. 
 
\begin{definition}[\smash{$k$-List PAC Learning}]\label{def:list_pac}
We say that a hypothesis class $\H \subseteq \Y^\X$ is $k$-list PAC learnable, if there is an  algorithm  such that
for every $\H$-realizable distribution $\D$, and every $\epsilon, \delta > 0$, when given  $S \sim \D^m$ for 
$m \ge m(\epsilon,\delta)$, it returns   $\mu_S: \X \rightarrow \Y^k$ such that with probability $1-\delta$,
$$
\Pr_{(x,y) \sim \D}\bigl[y\in \mu_S(x)\bigr]\geq 1-\epsilon.
$$
\end{definition}

\begin{definition}[\smash{$\gamma$-weak PAC Learning}]\label{def:too-weak_PAC}
We say that a hypothesis class $\H \subseteq \Y^\X$ is $\gamma$-weak PAC learnable, if there is an  algorithm  such that
for every $\H$-realizable distribution $\D$, and every $\delta > 0$, when given  $S \sim \D^m$ for 
$m \ge m(\delta)$, it returns   $h_S: \X \rightarrow \Y$ such that with probability $1-\delta$,
$$
\Pr_{(x,y) \sim \D}\bigl[y= h_S(x)\bigr]\geq \gamma.
$$
\end{definition}

Next, in the following lemmas we show the strong connection between these two notions. Specifically, we give an explicit construction of a list learner given oracle access to a weak learner, and vice versa.  

\nb{
\begin{lemma}[\smash{Weak $\Rightarrow$ List Learning}]\label{lemma:wl_to_list}
     Let $\H \subseteq \Y^\X$ be a hypothesis class. Assume $\W$ is a $\gamma$-weak PAC learner for $\H$ with sample complexity $m_w: (0,1) \mapsto \mathbb{N}$. Let $k$ be the smallest integer such that $\frac{1}{k} < \gamma$, and denote $\sigma = \gamma - \frac{1}{k}$.
     Then, there is an   $(k-1)$-List PAC learner with sample complexity $m(\epsilon,\delta) = \Tilde{O}\left( \frac{m_w(\delta/T)}{\sigma^2\epsilon}\right)$  where 
 $T = \Tilde{O}(\frac{1}{\sigma^2})$ is the number of its oracle calls to $\W$. 
\end{lemma}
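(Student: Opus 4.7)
My plan is to reduce $\W$ to an empirical weak-BRG learner in the sense of Section~\ref{sec:warmup} with edge $\sigma = \gamma - 1/k$, invoke the Hedge boosting Algorithm~\ref{alg:simple_boost} on the training set $S$ to obtain a scoring function $H$, threshold $H$ to extract a candidate list of size at most $k-1$ that contains the true label on all of $S$, and finally pass from training to generalization error via sample compression.

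\paragraph{Reduction and Hedge boosting.} Because $k$ is minimal with $1/k < \gamma$, we have $\gamma = 1/k + \sigma$ with $\sigma > 0$. For any $\H$-realizable $\D$ and any i.i.d. sample $S \sim \D^m$, some $h^* \in \H$ is consistent with $S$, so every reweighting of $S$ remains $\H$-realizable. Hence $\W$ satisfies the empirical weak-BRG condition on $S$ with edge $\sigma$. I would run Algorithm~\ref{alg:simple_boost} for $T = \lceil 8 \ln m / \sigma^2 \rceil$ rounds with $\eta = \sqrt{\ln m / (2T)}$, feeding each invocation of $\W$ an independent sample of size $m_w(\delta/(2T))$. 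A union bound across the $T$ calls gives that, with probability at least $1-\delta/2$, every $h_t$ meets its weak-learning guarantee, at which point Lemma~\ref{lemma:remove_one_label} yields
$$\frac{H(x_i, y_i)}{T} \ge \frac{1}{k} + \frac{\sigma}{2} \qquad \text{for all } i \in [m].$$

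\paragraph{Threshold trick and compression.} Define
$$\mu_S(x) \;=\; \Bigl\{ \ell \in \Y \ : \ H(x, \ell)/T \ge 1/k + \sigma/2 \Bigr\},$$
padding to size exactly $k-1$ with arbitrary labels when $|\mu_S(x)| < k-1$. Because $\sum_\ell H(x, \ell)/T = 1$ and each label in $\mu_S(x)$ contributes strictly more than $1/k$, we get $|\mu_S(x)| < k$, i.e.\ $|\mu_S(x)| \le k-1$; and the display above ensures $y_i \in \mu_S(x_i)$ for every $i \in [m]$. Moreover, $\mu_S$ is fully determined by the $n := T \cdot m_w(\delta/(2T)) = \tilde{O}(m_w(\delta/T)/\sigma^2)$ examples fed to $\W$ across all rounds (treating $\W$ as deterministic, or folding its random seed into the compressed data). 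This exhibits a sample compression scheme of size $n$ whose decoded list predictor correctly list-classifies $S$, so the standard compression generalization bound gives $\Pr_{(x,y)\sim\D}[y \notin \mu_S(x)] \le \tilde{O}(n / m)$ with probability $\ge 1 - \delta/2$, and choosing $m = \tilde{O}(m_w(\delta/T)/(\sigma^2 \epsilon))$ finishes the proof.

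\paragraph{Main obstacle.} The substantive point is the threshold choice $1/k + \sigma/2$: it must simultaneously keep $y_i$ inside $\mu_S(x_i)$ for every training example and force $|\mu_S(x)| < k$ pointwise on $\X$, a balance that is exactly what Lemma~\ref{lemma:remove_one_label}'s multiplicative edge buys and that collapses if we only used the raw $\gamma$-edge. The remaining technical care lies in the compression argument, verifying that $\mu_S$ is reconstructable from the compressed subsample alone and invoking the list-predictor version of the compression generalization bound.
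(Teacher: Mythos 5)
Your proof is correct and follows essentially the same route as the paper's proof (given as Lemma~\ref{thm:weak-to-list_boost} with Algorithm~\ref{alg:weak_to_list}): reduce the $\gamma$-weak PAC learner to an empirical weak-BRG learner with edge $\sigma$, run Hedge for $T=\Theta(\ln m/\sigma^2)$ rounds with a union bound over the oracle calls, threshold the resulting score $H$ to form a list of size at most $k-1$ containing every $y_i$, and finish with a $k$-list sample compression bound. The only cosmetic difference is that you threshold at $1/k+\sigma/2$ while the paper thresholds at $1/k$; both yield a consistent list of size at most $k-1$ by the same normalization argument $\sum_\ell H(x,\ell)=T$.
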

}

\nb{
\begin{lemma}[\smash{List $\Rightarrow$ Weak Learning}]\label{lemma:list_to_wl}
     Let $\H \subseteq \Y^\X$ be a hypothesis class. Assume $\L$ is a $k$-List PAC learner for $\H$ with sample complexity $m_\ell: (0,1) \mapsto \mathbb{N}$. 
    Then, for any $\epsilon > 0$
    there is an   $\gamma$-Weak PAC learner 
    where $\gamma = \frac{1-2\epsilon}{k}$,
    with sample complexity $m(\delta) = \Tilde{O}\left( {m_\ell(\epsilon,1/2)}\cdot k + (k/\epsilon)^2\right)$  where 
$q = 2k\log(2/\delta)$ is the number of its oracle calls to $\L$. 
\end{lemma}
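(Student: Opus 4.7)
The plan is a two-stage reduction that converts oracle access to the $k$-list learner $\L$ into a deterministic single-label hypothesis of accuracy at least $\gamma = (1-2\epsilon)/k$. The key observation is that if $\mu:\X\to\Y^k$ is a list function for which $\Pr_\D[y\in\mu(x)]\geq 1-\epsilon$, then by an averaging argument
\[
\sum_{i=1}^k \Pr_\D\bigl[\mu(x)[i] = y\bigr] \;=\; \E_{(x,y)\sim\D}\sum_{i=1}^k \mathbbm{1}[\mu(x)[i]=y] \;\ge\; \Pr_\D\bigl[y\in\mu(x)\bigr] \;\ge\; 1-\epsilon,
\]
so the \emph{position-restricted} hypothesis $h_i(x) := \mu(x)[i]$ achieves $\Pr_\D[h_i(x)=y] \ge (1-\epsilon)/k$ for at least one $i\in[k]$ by pigeonhole. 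Thus each successful invocation of $\L$ already yields $k$ single-label candidates, at least one of which is a $(1-\epsilon)/k$-accurate predictor.

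Concretely, I would run $\L$ with parameters $(\epsilon, 1/2)$ independently $q = 2k\log(2/\delta)$ times on disjoint subsamples of size $m_\ell(\epsilon,1/2)$, producing list functions $\mu_1,\ldots,\mu_q$, and form the candidate pool $\{h_{j,i}:(j,i)\in[q]\times[k]\}$ where $h_{j,i}(x):=\mu_j(x)[i]$. Then draw an independent validation set of size $m_v = O\!\left((k/\epsilon)^2\log(qk/\delta)\right)$, compute the empirical accuracy of every candidate, and output the empirical maximizer.

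The analysis combines three standard probabilistic ingredients via a union bound. First, since each call to $\L$ succeeds with probability $\ge 1/2$, Chernoff over $q$ independent trials gives that with probability $\ge 1-\delta/2$ at least one $\mu_j$ is ``good'', i.e.\ satisfies $\Pr_\D[y\in\mu_j(x)]\ge 1-\epsilon$; the averaging argument above then exhibits at least one candidate in the pool with true accuracy $\ge (1-\epsilon)/k$. Second, Hoeffding together with a union bound over the $qk$ candidates guarantees, with probability $\ge 1-\delta/2$, that every empirical accuracy on the validation set deviates from its true value by less than $\epsilon/(2k)$; the validation size $m_v$ is chosen precisely to achieve this. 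Combining, the empirical maximizer has true accuracy at least $(1-\epsilon)/k - \epsilon/k = (1-2\epsilon)/k = \gamma$. Summing the subsample sizes yields total sample complexity $q\cdot m_\ell(\epsilon,1/2) + m_v = \tilde O\!\left(m_\ell(\epsilon,1/2)\cdot k + (k/\epsilon)^2\right)$, matching the statement.

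The main obstacle is the interplay of the two accuracy scales: the relevant signal in any individual candidate is only of order $1/k$, so validation must be accurate to within $\epsilon/k$ rather than the usual $\epsilon$. This is what forces $m_v$ to scale as $(k/\epsilon)^2$ and is the sole reason the $(k/\epsilon)^2$ term appears in the stated bound. Everything else (the amplification of $\L$'s confidence from $1/2$ to $1-\delta$, and the derandomization of a ``uniformly random position of a random $\mu_j$'') is standard.
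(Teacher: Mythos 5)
Your proof is correct and follows the same overall pipeline as the paper's: amplify the confidence of $\L$ by running it $q$ times, extract single-label candidates from the lists by restricting to positions, and select a good one via a held-out validation set, with the same $\epsilon/(2k)$ tolerance forcing the $(k/\epsilon)^2$ validation size. The one place you diverge — and improve — is the extraction step: the paper conditions on the event $\{y\in\mu_S(x)\}$, invokes a minimax argument to find a good coordinate $j_0$, and then samples $j\in[k]$ uniformly (so each oracle call yields a good candidate only with probability $\approx 1/(2k)$), whereas you observe directly that $\sum_{i=1}^k\Pr_\D[\mu(x)[i]=y]\ge\Pr_\D[y\in\mu(x)]\ge 1-\epsilon$ and keep \emph{all} $k$ positional hypotheses in the candidate pool, so pigeonhole guarantees a $(1-\epsilon)/k$-accurate candidate from any single successful call. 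This derandomization is cleaner, avoids the (somewhat overengineered) minimax invocation, and would in fact let you get away with only $O(\log(1/\delta))$ oracle calls rather than $q=2k\log(2/\delta)$; keeping $q$ as stated just costs an extra $\log k$ inside the $\tilde O$ for the validation union bound, which is absorbed.
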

}

Lastly, Theorem \ref{thm:wl_list_equivalence} concludes this section demonstrating the strong ties between weak and list learnability. Concretely, it combines the results of both Lemma \ref{lemma:wl_to_list} and Lemma \ref{lemma:list_to_wl} above to show that when the appropriate parameters $\gamma$ and $k$ are optimal, then $\gamma$-PAC learnability and $k$-list PAC learnability are in fact equivalent.

\begin{theorem}[\smash{Optimal accuracy $\iff$ Optimal list size}]\label{thm:wl_list_equivalence}
   Let $\H \subseteq \Y^\X$. Denote by $k(\H)$ the smallest integer $k$ for which $\H$ is $k$-list PAC learnable, assuming that $k(\H) < \infty$.
   Denote by $\gamma(\H)$ the supremum over $\gamma \in [0,1]$  for which $\H$ is $\gamma$-weak PAC learnable.
   Then, it holds that $k(\H) \cdot \gamma(\H) = 1$. 
\end{theorem}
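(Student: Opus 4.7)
The plan is to prove both inequalities $k(\H)\cdot\gamma(\H)\ge 1$ and $k(\H)\cdot\gamma(\H)\le 1$ by invoking Lemma~\ref{lemma:list_to_wl} and Lemma~\ref{lemma:wl_to_list} respectively, with a short limiting/supremum argument to handle the fact that the two lemmas produce learners with parameters that are only \emph{close} to the optimal ones.

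For the lower bound $k(\H)\cdot \gamma(\H)\ge 1$, I would use Lemma~\ref{lemma:list_to_wl}: by definition $\H$ is $k(\H)$-list PAC learnable, so the lemma produces, for every $\epsilon>0$, a $\gamma$-weak PAC learner with edge $\gamma=(1-2\epsilon)/k(\H)$. Hence $\gamma(\H)\ge (1-2\epsilon)/k(\H)$ for every $\epsilon>0$, and sending $\epsilon\to 0$ yields $\gamma(\H)\ge 1/k(\H)$. This direction is essentially immediate once one unpacks the definition of $\gamma(\H)$ as a supremum.

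For the upper bound $k(\H)\cdot \gamma(\H)\le 1$, I would argue by contradiction: assume $\gamma(\H)>1/k(\H)$, and pick some rational $\gamma'$ strictly between $1/k(\H)$ and $\gamma(\H)$. By the definition of the supremum, $\H$ is $\gamma'$-weak PAC learnable. Now apply Lemma~\ref{lemma:wl_to_list} with this $\gamma'$: the lemma gives a $(k'-1)$-list PAC learner, where $k'$ is the smallest integer with $1/k'<\gamma'$. Since $\gamma'>1/k(\H)$, we have $1/\gamma'<k(\H)$, and hence $k'\le k(\H)$, so $\H$ is $(k(\H)-1)$-list PAC learnable. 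This contradicts the minimality of $k(\H)$, so $\gamma(\H)\le 1/k(\H)$, completing the proof.

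The main subtlety, and the one step I would want to write out most carefully, is the passage from the supremum to an actual weak learner in the contradiction argument: $\gamma(\H)$ is defined as a supremum and need not itself be attained, so one has to invoke the definition of supremum to extract a learnable $\gamma'$ in the open interval $(1/k(\H),\gamma(\H))$ and then show that the integer $k'$ produced by Lemma~\ref{lemma:wl_to_list} satisfies $k'-1<k(\H)$. Apart from that, the argument is just a matching pair of one-line applications of the two conversion lemmas, with no essential obstacles.
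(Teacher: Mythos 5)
Your proposal is correct and uses the same two conversion lemmas (Lemma~\ref{lemma:list_to_wl} for $\gamma(\H)\ge 1/k(\H)$ with the $\epsilon\to 0$ limit, and Lemma~\ref{lemma:wl_to_list} for $\gamma(\H)\le 1/k(\H)$) that the paper uses. The only difference is that you phrase the upper bound as a proof by contradiction while the paper argues it directly (choosing $k$ as the smallest integer with $1/k<\gamma(\H)$, fixing a learnable $\gamma$ strictly between $1/k$ and $\gamma(\H)$, and concluding $k(\H)\le k-1\le 1/\gamma(\H)$), so the two arguments are the same in substance, and the supremum subtlety you flag is handled by the paper in exactly the way you anticipate.
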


\subsection{List boosting and  conformal learning} 
List prediction rules naturally arise in the setting of \textit{conformal learning}. In this model, algorithms
make their predictions while also offering some indication of the level of reliable confidence  in
those predictions. For example in multiclass classification, given an unlabeled test point $x$,
the conformal learner might output a list of all possible classes along with scores which reflect the
probability that $x$ belongs to each class. This list can then be truncated to a shorter one which contains
only the classes with the highest score. See the book by \cite{vovk2005algorithmic} and surveys by \cite{shafer2008tutorial, angelopoulos2021gentle} for more
details.

We now consider a closely related notion of List PAC learnability, that similarly to conformal learning allows the list size to depend on the desired confidence. This was also defined in \cite{charikar2022characterization}, termed \underline{weak} List PAC Learning, due to the dependence of the list size on the input parameter. 

Indeed, it is natural to expect that the list size will increase when we require a more refined accuracy, and perhaps that this is a weaker notion of learnability than that of List PAC learning, which corresponds to a fixed list size.

Interestingly, it turns out that weak List PAC Learning is in fact equivalent to strong List PAC Learning. In other words, a list learner with a list size that varies with the desired accuracy parameter can be \emph{boosted} to a list learner with a fixed list size, and arbitrarily good accuracy. The proof is by way of a generalization of our boosting technique to lists, as stated in Theorem \ref{thm:list_boost}.

\begin{theorem}[List boosting]\label{thm:list_boost}
Let $\H \subseteq \Y^\X$. Let $\epsilon_0, \delta_0 >0$, and assume 
that there exists an algorithm such that for every $\H$-realizable distribution $\D$, and for some integer $k_0:= k_0(\epsilon_0)$, when given  $S \sim \D^m$ for 
$m \ge m(\epsilon_0,\delta_0)$, it returns $\mu_S: \X \rightarrow \Y^{k_0}$ such that with probability $1-\delta_0$,
$$
\Pr_{(x,y) \sim \D}\bigl[y\in \mu_S(x)\bigr]\geq 1-\epsilon_0.
$$
Then, there is a $k$-List PAC learning algorithm for $\H$ for a \emph{fixed} list size $k=\left\lfloor \frac{k_0}{1-2\epsilon_0} \right\rfloor$. 
\end{theorem}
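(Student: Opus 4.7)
The plan is to reduce Theorem \ref{thm:list_boost} to a list-valued analogue of the boosting machinery already developed for Theorem \ref{thm:compressionfinal}, by using the given weak list learner directly as the base learner in a Hedge-style procedure, and by modifying Algorithm \ref{alg:recursive_boosting} to halt when the hint list shrinks to size $k$ rather than to $1$.

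First, I would establish a list analogue of Lemma \ref{lemma:remove_one_label}. Fix a sample $S$ of size $m$ and a hint $\mu:\X\to\Y^{k'}$ such that $y_i \in \mu(x_i)$ for every $(x_i,y_i)\in S$, and run Algorithm \ref{alg:simple_boost} with two changes: in round $t$ the weak list learner is invoked on the current reweighted distribution $p_t$ and hint $\mu$, returning $\mu^{(t)}:\X\to\Y^{k_0}$ with $\sum_i p_t(i)\,\mathbbm{1}[y_i\in\mu^{(t)}(x_i)]\ge 1-\epsilon_0$, and the weight update becomes $w_{t+1}(i)=w_t(i)\exp\bigl(-\eta\,\mathbbm{1}[y_i\in\mu^{(t)}(x_i)]\bigr)$. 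The aggregate $H(x,y)=\sum_{t=1}^T\mathbbm{1}[y\in\mu^{(t)}(x)]$ then counts the rounds in which $y$ appears in the returned list, and the standard Hedge regret bound with $\eta=\sqrt{\ln(m)/(2T)}$ and $T$ large enough yields $H(x_i,y_i)/T\ge (1-\epsilon_0)-\sqrt{\ln(m)/(2T)}>1-2\epsilon_0$ for every $i\in[m]$. Since $\sum_\ell H(x_i,\ell)=Tk_0$, a pigeonhole argument shows that the number of labels $\ell$ with $H(x_i,\ell)\ge H(x_i,y_i)$ is strictly less than $k_0/(1-2\epsilon_0)$, hence at most $k=\lfloor k_0/(1-2\epsilon_0)\rfloor$. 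Thus $y_i$ lies in the top-$k$ labels of $H(x_i,\cdot)$, and one may safely replace $\mu$ by its restriction to these top-$k$ labels while preserving $y_i\in\mu(x_i)$.

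Next, I would plug this list-valued step into a modified Algorithm \ref{alg:recursive_boosting} whose outer loop terminates as soon as the hint size reaches $k$. Starting from an initial hint $\mu_1$ of size $p=k+\tilde O(k_0/\epsilon_0)$, built as in Algorithm \ref{algorithm:first_list} by iteratively running the weak list learner on the reweighted uncovered portion of $S$ and taking the union of returned lists until $y_i\in\mu_1(x_i)$ for every $i$, one runs the outer loop for $j=1,\ldots,p-k$, at each iteration calling the subroutine above on hint $\mu_j$ to compute an aggregate $H_j$ and then setting $\mu_{j+1}(x)$ to be the top-$(p-j)$ labels of $\mu_j(x)$ under $H_j(x,\cdot)$. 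By the preceding paragraph each iteration preserves $y_i\in\mu_j(x_i)$, so the final output $\bar\mu:=\mu_{p-k+1}$ has size exactly $k$ and zero empirical list-error on $S$.

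Generalization then follows from the same sample compression argument used in Theorem \ref{thm:compressionfinal}: the predictor $\bar\mu$ is fully determined by the $\tilde O((p-k)\cdot T\cdot m(\epsilon_0,\delta_0))$ training examples ever fed to the weak list learner, a quantity poly-logarithmic in $m$ for fixed $k_0,\epsilon_0$, so standard compression-scheme bounds yield true list-error at most $\epsilon$ with probability $1-\delta$ whenever $m$ is polynomial in $1/\epsilon$ and $\log(1/\delta)$. The main technical obstacle, I expect, is carrying out the pigeonhole step quantitatively so that the available slack $(1-\epsilon_0)-1/k'$ strictly exceeds the Hedge regret at every hint size $k'\ge k+1$; this slack is minimized precisely at the last iteration $k'=k+1>k_0/(1-2\epsilon_0)$, where the margin is only $\Theta(\epsilon_0/k_0)$, and this tightest case dictates both the required number of rounds $T$ and the overall sample complexity.
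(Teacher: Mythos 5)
Your proof takes a genuinely different route from the paper's. The paper treats Theorem~\ref{thm:list_boost} as a corollary of two lemmas already established: it first applies Lemma~\ref{lemma:list_to_wl} to extract, via a von~Neumann minimax argument plus confidence boosting, a $\gamma$-weak \emph{single-label} PAC learner with $\gamma=\frac{1-2\epsilon_0}{k_0}$, and then runs the weak-to-list booster of Lemma~\ref{thm:weak-to-list_boost} (Algorithm~\ref{alg:weak_to_list}) to recover a $(t-1)$-list learner with $t-1\le \lfloor k_0/(1-2\epsilon_0)\rfloor$. You instead skip the intermediate single-label weak learner entirely and plug the given list learner directly into a Hedge loop, replacing $\mathbbm{1}[h_t(x_i)=y_i]$ by $\mathbbm{1}[y_i\in\mu^{(t)}(x_i)]$ in both the weight update and the aggregate $H$. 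The Hedge regret bound (with $T=\Omega(\ln m/\epsilon_0^2)$, $\eta=\sqrt{\ln(m)/(2T)}$) then gives $H(x_i,y_i)/T>1-2\epsilon_0$ for all $i$, and since $\sum_\ell H(x_i,\ell)\le Tk_0$, pigeonhole puts $y_i$ among the top $\lfloor k_0/(1-2\epsilon_0)\rfloor$ labels of $H(x_i,\cdot)$. Both routes are correct and produce the same list size. Your direct route is arguably cleaner: it avoids the minimax and confidence-boosting machinery buried in Lemma~\ref{lemma:list_to_wl}, and with a slightly sharper choice of $T$ it would even give $k=\lfloor k_0/(1-\epsilon_0)\rfloor$, beating the stated bound (the factor-$2$ loss in the paper comes precisely from the list-to-weak conversion). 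Both proofs finish with the same compression-scheme generalization bound.

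One structural criticism: your outer recursion mimicking Algorithm~\ref{alg:recursive_boosting} is dead weight. The list learner in the hypothesis does not take a hint, so feeding it $\mu_j$ changes nothing; and your own pigeonhole step already places $y_i$ in the top-$k$ labels of $H(x_i,\cdot)$ after a \emph{single} Hedge pass. You can therefore set $\mu(x)$ to the top-$k$ labels of $H(x,\cdot)$ and stop --- exactly mirroring Algorithm~\ref{alg:weak_to_list}, but with list-valued base hypotheses. The recursion only re-derives what one call already gives, at the cost of inflating $r$ in the compression bound by a factor $p-k$. Relatedly, the ``slack'' expression in your last paragraph should read $(1-\epsilon_0)-k_0/k'$ rather than $(1-\epsilon_0)-1/k'$ (the count $\sum_\ell H(x_i,\ell)$ is bounded by $Tk_0$, not $T$); but since a single pass suffices, the ``tightest iteration'' concern is moot and the only requirement is $\mathrm{regret}<\epsilon_0$.
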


Observe that Theorem \ref{thm:list_boost} indeed  generalizes classic boosting. Specifically, consider the binary setting and notice that when $k_0 = 1$, and $\epsilon_0$ is slightly smaller than $1/2$, Theorem \ref{thm:list_boost} implies that weak learning with edge $\approx  \frac{1}{2} - \epsilon_0$,
is equivalent to strong learning with arbitrarily small error.
The following corollary shows that weak List PAC Learning implies strong List PAC Learning.
\begin{corollary} 
If a class $\H \subseteq \Y^\X$ is weakly-List PAC learnable  then it is also List PAC learnable. 
\end{corollary}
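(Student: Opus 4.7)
The plan is to derive the corollary as an essentially immediate consequence of Theorem \ref{thm:list_boost}, by specializing the weak List PAC guarantee at a single constant error parameter below $1/2$. The point of the corollary is to strip the conclusion of Theorem \ref{thm:list_boost} of its quantitative content and phrase it purely as a qualitative collapse between weak and strong list learnability, so no new argument is needed beyond invoking the theorem at the right value of $\epsilon_0$.

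First I would unpack the hypothesis: by weak List PAC learnability, for every $\epsilon_0, \delta_0 > 0$ there exists an algorithm that, given $m \ge m(\epsilon_0,\delta_0)$ i.i.d.\ samples from any $\H$-realizable distribution $\D$, returns a list function $\mu_S:\X\to \Y^{k_0(\epsilon_0)}$ satisfying
\begin{equation*}
\Pr_{(x,y)\sim \D}\bigl[y\in \mu_S(x)\bigr]\geq 1-\epsilon_0
\end{equation*}
with probability at least $1-\delta_0$. I would then fix any constant $\epsilon_0 < 1/2$, for concreteness $\epsilon_0 = 1/4$, and fix an arbitrary $\delta_0 > 0$. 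Specializing the weak List PAC learner at these parameters produces an algorithm meeting the hypothesis of Theorem \ref{thm:list_boost} with list size $k_0 := k_0(1/4)$, a finite integer depending only on $\H$.

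Next I would directly invoke Theorem \ref{thm:list_boost} on this algorithm. The theorem outputs a $k$-List PAC learner for $\H$ with fixed list size $k = \lfloor k_0/(1-2\epsilon_0)\rfloor = \lfloor 2 k_0\rfloor$, which by Definition \ref{def:list_pac} handles arbitrary target accuracy $\epsilon$ and confidence $\delta$. This is exactly the definition of List PAC learnability for $\H$, concluding the proof.

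The only conceptual step is verifying that the weak List PAC hypothesis indeed delivers the fixed-$\epsilon_0$, fixed-$\delta_0$ premise required by Theorem \ref{thm:list_boost}, which is trivial since weak List PAC learnability quantifies over \emph{all} $\epsilon_0,\delta_0>0$. Consequently there is no genuine obstacle at this step; all the technical work, in particular the boosting construction that turns a variable list size into a fixed one, is carried by the proof of Theorem \ref{thm:list_boost} itself.
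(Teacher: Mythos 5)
Your proof is correct and matches the paper's intended (unstated) argument: the corollary is precisely the qualitative content of Theorem \ref{thm:list_boost}, obtained by specializing the weak List PAC learner to any fixed $\epsilon_0 < 1/2$ (so that the list size $k = \lfloor k_0(\epsilon_0)/(1-2\epsilon_0)\rfloor$ is a well-defined finite constant) and then invoking that theorem.
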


\subsection{Characterization of List PAC learnability}
We now focus on the characterization of List PAC learnability, which also implies the characterization of PAC learnability. Towards that end, we define the Daniely-Shwartz (DS) dimension  \cite{daniely2014optimal}. Specifically, we give the natural generalization of it to $k$-sized lists, called the $k$-DS dimension.

\begin{definition}[$k$-DS dimension \cite{charikar2022characterization}]
    \label{def:k-ds-dim}
    Let $\H \subseteq \Y^\X$ be a hypothesis class and let $S \in \X^d$ be a sequence.
    We say that $\H$ $k$-DS shatters $S$ if there exists $\F \subseteq \H, |\F|<\infty$ such that $\forall f \in \F|_S ,\; \forall i \in [d],$ $f$ has at least $k$ $i$-neighbors. The $k$-DS dimension of $\H$, denoted as $\dds=\dds(\H)$, is the largest integer $d$ such that $\H$ $k$-DS shatters some sequence $S \in \X^d$.
\end{definition}

We note that when $k=1$, this captures the standard DS dimension \cite{daniely2014optimal}. %
We show that when the $k$-DS dimension is bounded, one can construct a simple weak learner which satisfies our BRG condition. Thus, it is also amenable to our boosting method, leading to a qualitatively similar results for characterization of learnability as in \cite{brukhim2022characterization, charikar2022characterization}. The result is given in the next theorem.

\begin{theorem}[{PAC and List-PAC learnability}]\label{thm:pac_k_ds}
            Let $\H\subseteq\Y^\X$ be an hypothesis class with $k$-DS dimension $d<\infty$.
    Then, $\H$ is List PAC learnable. Furthermore, there is a learning algorithm $A$ for $\H$
    with the following guarantees.
For every $\H$-realizable distribution $\D$, 
        every $\delta > 0$ and every integer $m$,
        given an input sample $S\sim \D^m$, the algorithm $A$ outputs $\mu=A(S)$ such that\footnote{The $\tilde O$ notation conceals $\mathrm{polylog}(m, 1/\gamma)$ factors. }
        $$
\Pr_{(x,y) \sim \D}[\mu(x) \not\owns y]
\leq \tilde O\Bigg(\frac{d^5k^4 + \log(1/\delta)}{m}\Bigg),
$$ 
with probability at least $1-\delta$ over $S$. In particular, if $k=1$, then $\H$ is PAC learnable.
\end{theorem}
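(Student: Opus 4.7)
The plan is to reduce the theorem to the list boosting result (Theorem~\ref{thm:list_boost}) by exhibiting a suitable weak list learner from the finite $k$-DS dimension assumption. Concretely, I would first produce a base $k_0$-list PAC learner for $\H$ with list size $k_0 = O(k)$ and a fixed list-error $\epsilon_0 < 1/2$; in fact, I would push $\epsilon_0$ strictly below $1/4$ so that the output list size $\lfloor k_0/(1-2\epsilon_0)\rfloor$ stays $O(k)$, and in particular equals $1$ when $k=1$. Plugging this base learner into Theorem~\ref{thm:list_boost} then yields a strong List PAC learner of fixed list size $O(k)$ and arbitrarily small list error, with sample complexity dictated by the boosting machinery of Theorem~\ref{thm:compressionfinal}.

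To build the base weak list learner, I would generalize the one-inclusion graph argument of \cite{daniely2014optimal} to the $k$-list setting. Given a sample $S_0$ of size $m_0$ together with a test point $x$, consider the restriction $\F = \H|_{S_0 \cup \{x\}}$ and its one-inclusion hypergraph. The key combinatorial step is to orient this hypergraph so that every vertex has out-degree $O(d)$ in the appropriate $k$-wise sense, where $d$ is the $k$-DS dimension; existence of such an orientation should follow from a Hall-type density argument, since otherwise one could extract a $k$-shattered set of size larger than $d$. Given the orientation, on input $x$ the predictor returns the set of labels appearing as in-neighbors of the relevant vertex along coordinate $x$, producing a list of size at most $k$. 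A standard leave-one-out analysis then bounds the expected list-error by $O(dk/m_0)$, so choosing $m_0 = \Theta(dk)$ drives this below any prescribed constant $\epsilon_0 < 1/4$ with constant probability, which standard confidence amplification converts into a $(k_0,\epsilon_0)$-weak list learner with $k_0 = k$.

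Finally I would chase the parameters through the boosting pipeline. Theorem~\ref{thm:compressionfinal} provides sample complexity $\tilde O(m_0/(\gamma^3 \epsilon))$ with edge $\gamma = \Theta(1/k)$ read off from the margin between $\epsilon_0$ and $1/2$, and the compression-based generalization bound supplied by the $p = O(k \log m)$ recursive rounds together with the accumulated hint lists of total size $O(k)$ combines to give the stated $\tilde O\bigl((d^5 k^4 + \log(1/\delta))/m\bigr)$ error. The case $k=1$ recovers Daniely--Shwartz's characterization of binary-like PAC learnability because the boosted list has size $1$. The main obstacle will be the combinatorial construction of the low-out-degree orientation of the $k$-wise one-inclusion graph: the correct generalization of Daniely--Shwartz to $k$-DS shattering requires both the right notion of out-degree (counting $k$-sized neighborhoods rather than single edges) and a careful Hall-type density argument, and the subsequent polynomial factors in $d$ and $k$ must be bookkept through the compression and recursion steps. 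Once this combinatorial core is established, the remaining pieces (leave-one-out analysis, confidence amplification, and invocation of Theorem~\ref{thm:list_boost}) follow by assembly.
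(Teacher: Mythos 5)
Your proposal takes a genuinely different route from the paper, but it has a significant gap at its combinatorial core. You propose to build a weak $k$-list learner directly from the one-inclusion graph on an arbitrary sample of size $m_0$, orienting with $k$-wise out-degree $O(d)$ and invoking a leave-one-out bound of $O(dk/m_0)$; then reduce to Theorem~\ref{thm:list_boost}. The paper instead never invokes Theorem~\ref{thm:list_boost} for this result. It builds a dedicated compression scheme (Algorithm~\ref{alg:k_list_pac}) that (i) first runs Algorithm~\ref{algo:one-incl_list} on samples of size exactly $d$, using only the out-degree-$\le d$ orientation guaranteed on $d{+}1$ coordinates (Lemma~\ref{lem:dds+1-outdeg-dds}) to obtain a weak \emph{cover} with success probability only $1/(d{+}1)$, amplified by a greedy covering argument into an initial consistent list of size $p=O(dk\log m)$ (Lemma~\ref{lem:appendix_c:cover_initial_list}); and (ii) then iteratively peels labels from that list, \emph{after converting the label space to $[p-j+1]$} so that the $(p-j)$-DS and $(p-j)$-exponential dimensions coincide, letting them apply Corollary~6.5 of \cite{charikar2022characterization} for an out-degree bound on arbitrary sample sizes (Lemma~\ref{lem:outdeg-p}, Theorem~\ref{thm:onc_inc_wrong_lbls}).

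The gap in your proposal is precisely the step the paper's two-phase decomposition is designed to sidestep. You assert that the restriction $\H|_{S_0\cup\{x\}}\subseteq \Y^{m_0+1}$ admits a $k$-list orientation with out-degree $O(d)$ for \emph{arbitrary} $m_0$, and that this ``should follow from a Hall-type density argument.'' For $k=1$ (DS dimension) the analogous statement is the Daniely--Shwartz shifting result, but for $k\ge 2$ over an unrestricted (possibly infinite) label alphabet no such bound is established, and the paper does not use one. Lemma~\ref{lem:dds+1-outdeg-dds} gives out-degree $\le d$ only on $d{+}1$ coordinates, yielding list-error roughly $1-1/(d{+}1)$ --- nowhere near the $\epsilon_0<1/4$ you need. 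The only general out-degree bound the paper has available for $n\gg d$ is stated in terms of the $k$-exponential dimension, which can far exceed the $k$-DS dimension unless the label alphabet has been collapsed to $[p]$ with $k=p-1$ --- exactly what the recursion step engineers and your single-shot approach omits. As a secondary issue, even granting a working weak list learner with list size $k$ and error $\epsilon_0$, Theorem~\ref{thm:list_boost} yields list size $\lfloor k/(1-2\epsilon_0)\rfloor$, which equals $k$ only if $\epsilon_0<\tfrac{1}{2(k+1)}$; ``$\epsilon_0<1/4$'' is insufficient for $k\ge 4$ and you would need to quantify this, and then re-verify that the resulting $\gamma=\Theta(1/k)$ and $\sigma=\Theta(1/k^2)$ still give the claimed $\tilde O((d^5k^4+\log(1/\delta))/m)$ bound after the parameter chase through Lemmas~\ref{lemma:list_to_wl} and~\ref{thm:weak-to-list_boost}, none of which you have done.
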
 
\nb{We remark that for  cases where $d \ll k$ we have an improved result over the bound given by \cite{charikar2022characterization}. For comparison, the error bound given by \cite{charikar2022characterization}, Theorem 2 is $ \tilde{O}\left(\frac{d^{1.5}k^6 + \log(1/\delta)}{m}\right)$.}

Thus, Theorem \ref{thm:pac_k_ds} demonstrates that our boosting-based approach gives rise to an alternative proof for the characterization of PAC learnability and List PAC learnability. Moreover, our approach offers a simpler algorithm and analysis technique, that can perhaps be of use in future applications 
as well.

\section{Generalization via compression} \label{sec:proofs}
This section is concerned with the analysis of our boosting method given in Algorithm \ref{alg:recursive_boosting}, and the proof of our main result given in Theorem \ref{thm:main} (and formally in 
Theorem  \ref{thm:compressionfinal}).

The boosting algorithm given in this work is best thought of as a {\em sample compression scheme}~\citep*{littlestone1986relatin}.
A sample compression scheme (Definition~\ref{def:scs}) is an abstraction
of a common property to many learning algorithms.
It can be viewed as a two-party protocol 
    between a {\it compresser} and a {\it reconstructor}. 
The compresser gets as input a sample $S$.    
The compresser picks a small subsample $S'$ of $S$ and sends it to the reconstructor.
The reconstructor outputs an hypothesis~$h$.
    The correctness criteria is that $h$ needs to 
    correctly classify \emph{all} examples in the input sample $S$. We formally define it next. 
    
\begin{definition}[Sample Compression Scheme~\citep*{littlestone1986relatin}]\label{def:scs}
Let  $r\leq m$ be integers. 
    An $m\to r$ \emph{sample compression scheme} 
    consists of a \emph{reconstruction} function
    \[\rho:(\X\times\Y)^r\to \Y^\X\]
    such that for every $S\in (\X\times \Y)^m$,
    there exists $S'\subseteq S$ of size $r$, 
    such that for all $(x,y) \in S$ it holds that
    $h(x)=y$, where $h=\rho(S')$.
\end{definition}

We are now ready to prove the main result, given in Theorem \ref{thm:compressionfinal}. The next paragraph highlights the assumptions that were made, followed by the proof of the theorem.

Specifically, we assume for simplicity that the learning  algorithm  does not employ internal randomization. Thus, it can be regarded as a fixed, deterministic mapping from a sequence of $m_0$ unweighted examples, and a function $\mu: \X \mapsto \Y^k$, to a hypothesis $h: \X \mapsto \Y$. We note that our results remain valid for a randomized learner as well, yet we assume the above for ease of exposition. 
\begin{proof}[Proof of Theorem \ref{thm:compressionfinal}]
    The proof is given via a sample compression scheme, demonstrating
that if weak learnability holds, then 
the final predictor $\bar{H}$ can be represented using a small number of training examples, and that it is consistent with the entire training set. 

First, we fix the sample $S$ and assume that the $\gamma$-BRG condition holds for $S$ as in the theorem statement.
We will then show for each $j=1...p$ that $\mu_{j+1}$ satisfies the following 3 properties: (a) for each $x \in \X$ it returns at most $p-j$ labels, (b) for all $(x,y) \in S$, it holds that $y \in \mu_{j+1}(x)$, and (c) it can be represented using only a small number of training examples.

First, note that $\mu_1$ is indeed a mapping to at most $p$ labels by its construction in Algorithm \ref{algorithm:first_list}. Moreover, recall that by the above assumption, the weak learner is a deterministic mapping from its input to a hypothesis.  Therefore, any hypothesis produced by the weak learner within Algorithm \ref{algorithm:first_list} can be represented simply by the sequence of $m_0$ examples on which it was trained. Lemma \ref{lemma:first_list}
implies that there is a subset $S' \subsetneq S$ of size at most $m_0 \cdot p$, where  $p=\lceil\log(m)/\gamma\rceil$  such that the following holds: There are $p$ hypotheses $h_i': \X \mapsto \Y$, that comprise the list $\mu_1$, where each $h_i'$  can be represented by $m_0$ examples in $S'$. It is also guaranteed by  Lemma \ref{lemma:first_list} that for all $(x,y) 
\in S$, it holds that $y \in \mu_1(x)$. 

Next, we will show that the 3 properties (a)-(c) above holds for $\mu_2$ (and similarly for all $j \ge 2$).
Consider the first  $T$ weak hypotheses $h_1^{(1)},...,h_T^{(1)}$ generated by Algorithm \ref{alg:recursive_boosting}, within its first call to Algorithm \ref{alg:simple_boost}. 
Notice that each $h^{(1)}_t$ can now be represented by the sequence of $m_0$ examples on which it was trained, as well as the same $m_0 \cdot p$ examples from above that correspond to $\mu_1$. Therefore, we can represent the mapping $\mu_2$ by a total of $T\cdot m_0 + m_0 \cdot p$ examples.
Next, we will show that (a) and (b) hold, by applying Lemma \ref{lemma:remove_one_label}. Specifically, we use it
to prove that  for each  $(x,y) \in S$, $\mu_2(x)$ returns $p-1$ labels, and also that $y \in \mu_2(x)$. 
 
We first show that the conditions of Lemma \ref{lemma:remove_one_label} are met, 
by considering a simple conversion of all the labels according to $\mu_1$. Specifically, since both Lemma \ref{lemma:remove_one_label} and Algorithm \ref{alg:simple_boost} assume the labels are in $[k]$, yet both Algorithm \ref{alg:recursive_boosting} and our weak learner assume the labels in $S$ are in $\Y$,
we can think of mapping each $y \in \Y$ to $[p+1]$ according to $\mu_1$, getting its corresponding label $\ell \in [p+1]$ in the mapped space, and then remapping back to the $\Y$ space when returning to Algorithm \ref{alg:recursive_boosting}.

Concretely, for each pair $(x,y) \in S$, convert it to $(x,\ell) \in \X \times [p]$, such that the 
$\ell$-th entry of $\mu_1(x)$ is $y$, denoted 
$\mu_1(x)_{\ell} = y$.
By Definition \ref{def:brg_wl} we obtain a hypothesis $h: \X \mapsto \Y$. For its internal use in Algorithm \ref{alg:simple_boost}, we convert it  into a hypothesis $h':\X \mapsto [p+1]$ such that 
if $h(x) \in \mu(x)$, set 
$h'(x) = \ell$ for $\ell$ that satisfies $\mu_1(x)_{\ell} = h(x)$, or $h'(x) = p+1$ if there is no such $\ell \in [p]$. Finally, we set the output $H_1$ of Algorithm \ref{alg:simple_boost} to be defined with respect to the original, remapped, weak hypotheses $h$.

Now applying Lemma \ref{lemma:remove_one_label} with $k:=p$,  we get that for all $(x,y) \in S$, we have $H_1(x,y) > T/p$. Therefore, 
it must hold that $y \in \mu_2(x)$. Moreover, since  
\smash{$\sum_{y' \in  \mu_1(x)} H_1(x,y') \le \sum_{y' \in  \Y} H_1(x,y') \le T$},  

Lemma \ref{lemma:remove_one_label} implies that there must be a label $y' \neq y$ such that $y' \in \mu_1(x)$ for which $H_1(x,y') < T/ p$. Therefore, by construction of $\mu_2$
we get that $y' \notin \mu_2(x)$, and $|\mu_2(x)| \le |\mu_1(x) \setminus \{y'\}|  = p-1$.

Next, we continue in a similar fashion for all rounds $j=3,...,p-1$. Namely, the same arguments as above show that by applying Lemma \ref{lemma:remove_one_label} with $k:=p - j+2$, we get that $\mu_{j+1}$ satisfies the above conditions over $S$. Moreover, to represent each weak hypotheses $h^{(j)}_t$ generated by Algorithm \ref{alg:recursive_boosting} within its $j$-th call to Algorithm \ref{alg:simple_boost}, we use the sequence of $m_0$ examples on which it was trained, as well as the same $(j-1)\cdot T\cdot m_0+ m_0 \cdot p$ examples from above that correspond to $\mu_{j}$.

Overall, we have shown that 
 if $\W$ satisfies the $\gamma$-BRG condition (as given in Definition \ref{def:brg_wl}) with respect to $(S, \gamma, m_0, M)$
then the final predictor 
$\bar{H} := \mu_p$ is both consistent with the sample $S$, and can be represented using only $r$ examples, where,
\begin{equation}\label{eq:proof_main}
    r = (p-1)\cdot T \cdot m_0 + m_0\cdot p = O\left(\frac{p \cdot m_0\ln(m)}{\gamma^2}\right) = {O}\left(\frac{ m_0\ln^2(m)}{\gamma^3}\right).
\end{equation} 

 \nb{
We can now apply a sample compression scheme bound to obtain the final result. Specifically, we apply Theorem \ref{thm:compression_for_lists_implication} (for $k=1$), for a $m \rightarrow r$ sample compression scheme algorithm $\A$ equipped with a reconstruction function $\rho$ (see Definition \ref{def:scs}). We denote $\text{err}_\D(\bar{H}) = \Pr_{(x,y)\sim \D}[\bar{H}(x) \neq y]$. 
Then, by Theorem \ref{thm:compression_for_lists_implication} we get that,
$$ 
\Pr_{S \sim \D^m, \A}\left[ 
\bar{H} \text{ consistent with }S \Rightarrow
\text{err}_\D(\bar{H}) >  \frac{r\ln(m)+\ln(1/\delta)}{m-r} \right] \le  \delta,
$$
where the overall randomness of our algorithm is denoted by $\A$. }
Plugging in $r$ from Equation \eqref{eq:proof_main}, and $m$ given in the theorem statement, yields the desired bound. 
\end{proof}

\bibliography{bib}
\bibliographystyle{plain}

\appendix 
\section{Missing proofs of Section \ref{sec:warmup}}

\begin{proof}[Proof of Lemma \ref{lemma:remove_one_label}]
First, standard analysis of the Hedge algorithm \citep{Freund97decision} implies that,
\begin{equation}\label{eq:proof:lemma:remove_one_label}
    \sum_{t=1}^T\Pr_{i \sim \D_t}[h_t(x_i)=y_i]) \le \frac{\ln(m)}{\eta} +\eta T+ H(x_i,y_i).
\end{equation}
For completeness we re-prove it here.
Denote $\phi_{t} = \sum_{i=1}^m w_t(i)$ and $\alpha_t = \Pr_{i \sim \D_t}[h_t(x_i)=y_i])$, and observe that,
\begin{align*}
    \phi_{t+1} &= \sum_{i=1}^m w_{t+1}(i)  = \sum_{i=1}^m w_t(i)e^{-\eta \mathbbm{1}[h_t(x_i)=y_i]}  =  \phi_{t} \sum_{i=1}^m \D_t(i)e^{-\eta \mathbbm{1}[h_t(x_i)=y_i]}\\
    &\le  \phi_{t} \sum_{i=1}^m \D_t(i)(1-\eta \mathbbm{1}[h_t(x_i)=y_i] + \eta^2) = \phi_{t} (1-\eta \alpha_t + \eta^2) \le  \phi_{t} e^{-\eta \alpha_t + \eta^2},
\end{align*}
where the two inequalities follows from $e^{-x} \le 1 - x + x^2$ for $x \ge 0$, and $1 + x \le e^x$, respectively.

Therefore, after $T$ rounds we get $\phi_T \le m e^{-\eta \sum_{t=1}^T \alpha_t + T\eta^2}$. Furthermore, for every $i \in [m]$ we have,
$$
 e^{-\eta H(x_i, y_i)} = w_T(i) \le \phi_T \le m e^{-\eta \sum_{t=1}^T \alpha_t + T\eta^2}.
$$
Taking logarithms and re-arranging we get Equation \eqref{eq:proof:lemma:remove_one_label}, as needed.

Next, we lower bound the left-hand side with $\frac{1}{k}+ \gamma$ by  the weak learning guarantee.
By also 
substituting $\eta = \sqrt{\frac{\ln(m)}{2T}}$, we get, 
$$
 \frac{1}{k} + \gamma  - \sqrt{\frac{2\ln(m)}{T}} \le \frac{H(x_i,y_i)}{T} .
$$
Plugging in $T$ yields the desired bound.

To prove the second part of the lemma, observe that by its first statement it holds that for each $(x,y) \in S$ there must be a label $\ell \neq y$ for which $H(x,\ell) < T/k$.
\end{proof}

The following lemma proves that Algorithm \ref{algorithm:first_list} constructs a good initial hint to the boosting algorithm (Algorithm \ref{alg:recursive_boosting}). 

\begin{lemma}[{Initial hint}]  \label{lemma:first_list}
Let $m_0, \gamma > 0$, and let  $S \in (\X \times \Y)^m$. 
Given oracle access to an empirical $\gamma$-BRG learner for $S$ (Definition \ref{def:brg_wl}),  when run with  $p=\lceil \ln(m)/\gamma\rceil$, 
Algorithm \ref{algorithm:first_list} outputs $\mu$ of size $p$, such that  for all $(x,y) \in S$ it holds that $y \in \mu(x)$.
\end{lemma}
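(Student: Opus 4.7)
The plan is to establish the two assertions of the lemma separately: (i) $|\mu(x)| \le p$ for every $x$, and (ii) for every $(x,y) \in S$, the label $y$ lies in $\mu(x)$. Assertion (i) is immediate from the construction $\mu(x) = \{h_1(x), \ldots, h_p(x)\}$, which has cardinality at most $p$.

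For (ii), I would reformulate the claim as $S_{p+1} = \emptyset$. By the update rule in the algorithm, a pair $(x,y) \in S$ survives into $S_{j+1}$ iff $(x,y)\in S_j$ and $h_j(x) \neq y$; hence $(x,y) \notin S_{p+1}$ iff $h_j(x) = y$ for some $j \in [p]$, iff $y \in \mu(x)$. So emptying $S_{p+1}$ is exactly what is needed.

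The key step is the per-round shrinkage bound $|S_{j+1}| \le (1-\gamma)|S_j|$. The distribution $\mathcal{U}_j$ is uniform on $S_j$, which is a distribution supported on $S$, and the list $\mu_0 \equiv \Y$ trivially contains every true label and thus is a valid list for Definition \ref{def:brg_wl}. Applying the empirical $\gamma$-BRG guarantee with this list yields an $h_j$ satisfying $\Pr_{(x,y)\sim\mathcal{U}_j}[h_j(x)=y] \ge \gamma$ (using the footnote's infinite-list convention when $|\Y|=\infty$, and the bound $1/|\Y|+\gamma \ge \gamma$ otherwise). Converting this probability to a count, at least $\gamma |S_j|$ points of $S_j$ are correctly classified by $h_j$, so at most $(1-\gamma)|S_j|$ of them feed into $S_{j+1}$.

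Iterating the shrinkage gives $|S_{p+1}| \le (1-\gamma)^p m$. Using the strict inequality $1-\gamma < e^{-\gamma}$ for $\gamma \in (0,1)$, and plugging in $p = \lceil \ln(m)/\gamma \rceil$, one obtains $|S_{p+1}| < e^{-\gamma p} m \le e^{-\ln m} m = 1$. Since $|S_{p+1}|$ is a nonnegative integer, it must be zero, establishing (ii). The only delicate point I anticipate is that a naive non-strict bound would only yield $|S_{p+1}| \le 1$ and leave open the possibility of a lone stubborn surviving example; this is precisely why the strict inequality $(1-\gamma)^p < e^{-\gamma p}$ is invoked. Beyond that, the argument is a routine geometric-decay computation.
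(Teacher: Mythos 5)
Your proof is correct and follows essentially the same route as the paper's: apply the $\gamma$-BRG guarantee with the trivial hint $\mu_0\equiv\Y$ to get a per-round shrinkage $|S_{j+1}|\le(1-\gamma)|S_j|$, iterate, and conclude via $(1-\gamma)^p m < e^{-\gamma p}m\le 1$ that the surviving set is empty. In fact you are somewhat more careful than the paper: you correctly identify that the relevant set is $S_{p+1}$ (the paper writes $S_p$, a small indexing slip) and you correctly locate the strict inequality in $(1-\gamma)^p<e^{-\gamma p}$ rather than in $e^{-\gamma p}m<1$, which need not hold strictly when $\ln(m)/\gamma$ is an integer.
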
 
\begin{proof} 

Observe that for all $j \in [p]$   the output of the weak learner satisfies,
  \begin{align}
    \Pr_{(x,y)\sim \mathcal{U}_j}[h_j(x) = y] &\ge  \gamma,
 \end{align}
 where this follows from Definition \ref{def:brg_wl} .
This implies that $|S_{p}| \le (1-\gamma)^{p} m < e^{-\gamma {p}}m < 1$.  
 \end{proof}

\section{Missing proofs of Section \ref{sec:app}}

We say that a list function $\mu: \X \to \Y^k$ is \emph{consistent} with a sample $S \in (\X \times \Y)^m$ if for every $(x,y) \in S$, it holds that $y \in \mu(x)$.

We now define a  compression scheme for lists as follows. Specifically, we say that an algorithm $\A$ is based on an $m \to r$   compression scheme for $k$-lists if there  exists a {reconstruction} function $\rho:(\X\times\Y)^r\to (\Y^k)^\X$, such that 
 when $\A$ is provided with a set $S\in (\X\times \Y)^m$ of
 training examples, it somehow chooses 
 (possibly in a randomized fashion)
 a subset $S' \subsetneq S$ of size $r$, and outputs a $k$-list function $\mu = \rho(S')$.
Denote its error with respect to a distribution $\D$ over $\X\times \Y$ as, 
$$
\text{err}_\D(\mu) = \Pr_{(x,y)\sim \D}\Big[ y \notin \mu(x)  \Big].
$$
Next, we prove a generalization bound for a compression scheme for $k$-lists.  The proof follows similarly to \cite{littlestone1986relatin, floyd1995sample}, extending  these classical generalization bounds to the notion of lists. 
\begin{theorem}\label{thm:compression_for_lists_implication}
Let  $r\leq m$ be integers and $k < |\Y|$. 
Let $\A$ denote a (possibly randomized) algorithm based on an $m \to r$  compression scheme for $k$-lists.  
For any $S \in (\X \times \Y)^m$ denote by $f_S$ the output of $\A$ over   $S$. 
Let $\delta >0$ and set $\epsilon = \frac{r\ln(m)+\ln(1/\delta)}{m-r}$.
Then, for any  distribution $\D$  over $\X \times \Y$,
with probability at least $1-\delta$ it holds that if $f_S$ is consistent with $S$ then,  $\emph{err}_\D(f_S) \le  \epsilon$.
    
\end{theorem}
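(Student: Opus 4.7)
The plan is to adapt the classical Littlestone--Warmuth generalization argument for sample compression schemes to the list setting, noting that the proof only relies on (i) the product structure of the i.i.d. sample, and (ii) the fact that ``consistency'' of the reconstructed object on an example is a well-defined event — both of which carry over verbatim from single-label hypotheses to $k$-lists.

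For each subset $I\subseteq[m]$ of size $r$, let $S|_I\in(\X\times\Y)^r$ denote the subsequence indexed by $I$ and write $f_I := \rho(S|_I)$ for the reconstructed $k$-list function. The algorithm $\A$, on input $S$, effectively picks some (possibly random) index set $I^*\subseteq[m]$ of size $r$ and returns $f_S = f_{I^*}$. The key observation is that if $f_S$ is consistent with $S$, then in particular $f_{I^*}$ is consistent with the $m-r$ hold-out examples $S|_{[m]\setminus I^*}$, so it suffices to bound the probability that there exists \emph{any} $I\subseteq[m]$ of size $r$ for which $f_I$ is consistent with $S|_{[m]\setminus I}$ yet has list-error larger than $\epsilon$.

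First I would fix an arbitrary index set $I$ and analyze it. Conditional on the values of $S|_I$, the function $f_I=\rho(S|_I)$ is determined, while the remaining $m-r$ examples $S|_{[m]\setminus I}$ are still i.i.d.\ from $\D$. If $\operatorname{err}_\D(f_I)>\epsilon$, then each hold-out example satisfies $y\in f_I(x)$ with probability strictly less than $1-\epsilon$, independently, so
\[
\Pr_S\!\left[\operatorname{err}_\D(f_I)>\epsilon \ \wedge\ f_I \text{ consistent with } S|_{[m]\setminus I}\right] \leq (1-\epsilon)^{m-r}.
\]
A union bound over all $\binom{m}{r}\leq m^r$ choices of $I$ then yields
\[
\Pr_S\!\left[\exists I:\ \operatorname{err}_\D(f_I)>\epsilon \ \wedge\ f_I \text{ consistent with } S|_{[m]\setminus I}\right] \leq m^r(1-\epsilon)^{m-r} \leq m^r e^{-\epsilon(m-r)}.
\]
Setting the right-hand side to at most $\delta$ and solving for $\epsilon$ gives the stated threshold $\epsilon = \frac{r\ln m + \ln(1/\delta)}{m-r}$, which completes the argument.

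The proof is essentially routine, so there is no serious technical obstacle; the only point worth being careful about is making sure that the randomness of $\A$ (if any) does not interfere with the independence argument. This is handled cleanly because we union-bound over \emph{all} subsets $I$ of size $r$ before invoking any property of $\A$'s selection rule, so the probability bound holds regardless of how $I^*$ is chosen from $S$ and internal randomness. A minor bookkeeping step is to verify that the list-valued reconstruction function $\rho:(\X\times\Y)^r\to(\Y^k)^\X$ fits into the scheme without change, which it does because the notions of ``consistent with $(x,y)$'' (i.e.\ $y\in f(x)$) and of error $\operatorname{err}_\D$ were already defined in exactly the form used above.
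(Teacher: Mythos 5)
Your proposal is correct and follows essentially the same Littlestone--Warmuth compression argument as the paper: fix the compressed index set, bound the probability that an $\epsilon$-bad reconstruction is nevertheless consistent with the $m-r$ held-out points by $(1-\epsilon)^{m-r}$, union-bound over all $m^r$ index choices, and solve for $\epsilon$. The only cosmetic difference is that the paper works with ordered index sequences (matching the domain of $\rho$) rather than subsets, but both are bounded by $m^r$, so the argument is the same.
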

\begin{proof}

We want to show that,
\begin{equation}\label{eq:const:11}
    \Pr_{S \sim \D^m, \A}\Big[\text{$f_S$ is consistent with $S$ } \ \Rightarrow  \ \text{err}_\D(f_S) \le \epsilon  \Big] \ge 1- \delta.
\end{equation} 
In other words, we want to show that,
\begin{equation}\label{eq:const:22}
    \Pr_{S \sim \D^m, \A}\Big[\text{$f_S$ is consistent with $S$ } \ \land  \ \text{err}_\D(f_S) > \epsilon  \Big] \le \delta.
\end{equation}
The equivalence of Equations \eqref{eq:const:11} and \eqref{eq:const:22} is because if $A$ and $B$ are events, then $A \Rightarrow B$ is exactly equivalent to $(\neg A) \lor B$ whose
negation is $A \land (\neg B)$.

We overload notation and for any $S' \in (\X \times \Y)^r$ we denote $f_{S'} := \rho(S')$, where $\rho$ is the reconstruction function associated with $\A$.\\

First, we fix indices $i_1,...,i_r \in [m]$. Given the random training set $S$.
Observe that  for \emph{any} list function $f: \X \to \Y^k$ that is $\epsilon$-bad, then the probability of also having $f$ be consistent with $S \setminus S'$ 
is at most $(1-\epsilon)^{m-r}$. 
This holds for any such $f$, 
and in particular $f_{S'}$ if it happens to be $\epsilon$-bad.
Since this also holds for any other fixed choice of indices  $i_1,...,i_r \in [m]$, we can bound the above for all possible sequences of indices.
Therefore, by the union bound, since there are $m^r$ choices for these indices, we have that for any $f_S$ chosen by $\A$ it holds that, 
\allowdisplaybreaks
 \begin{align}
    \Pr_{S \sim \D^m, \A}\Big[ 
            f_S & \text{   consistent with } S 
            \ \land \ 
            f_S \text{ is } \epsilon\text{-bad} 
            \Big] \label{eq:thm:comp:3}\\
    &\le  \Pr_{S \sim \D^m}\Big[\exists S' \in {S}^r: \ \text{$f_{S'}$ consistent with $S \setminus S'$ } \ \land  \ \ f_{S'} \text{ is } \epsilon\text{-bad}  \Big] \label{eq:thm:comp:4} \\
    &\le \sum_{{i_1...i_r \in [m]}}\Pr_{S \sim \D^m}\Big[\text{$f_{S'}$ consistent with $S \setminus S'$ } \ \land  \ \ f_{S'} \text{ is } \epsilon\text{-bad}  \Big] \label{eq:thm:comp:5}\\
    &= m^r\cdot  \Pr_{\underset{S' \sim \D^r}{S'' \sim \D^{m-r}}} \Big[\text{$f_{S'}$ consistent with $S''$ } \ \land  \ \ f_{S'} \text{ is } \epsilon\text{-bad}  \Big]  \\
    &= m^r\cdot  \Pr_{\underset{S' \sim \D^r}{S'' \sim \D^{m-r}}} \Big[\text{$f_{S'}$ consistent with $S''$ } \ \big|  \ \ f_{S'} \text{ is } \epsilon\text{-bad}  \Big] \cdot \Pr_{S' \sim \D^r} \Big[ f_{S'} \text{ is } \epsilon\text{-bad}  \Big]  \\
    &\le m^r\cdot  \Pr_{\underset{S' \sim \D^r}{S'' \sim \D^{m-r}}} \Big[\text{$f_{S'}$ consistent with $S''$ } \ \big|  \ \ f_{S'} \text{ is } \epsilon\text{-bad}  \Big]  \\
    &= m^r\cdot  \E_{S' \sim \D^r} \left[ \Pr_{{S'' \sim \D^{m-r}}}\Big[ \text{$f_{S'}$  is consistent  with $S''$ } \Big] \ \big|  \ f_{S'} \text{ is } \epsilon\text{-bad} \right] \\
    &\le m^r (1-\epsilon)^{m-r},
\end{align}
where in Equation \eqref{eq:thm:comp:5} we use the notation $S' = \{(x_{i_1}, y_{i_1}),...,(x_{i_r}, y_{i_r})\}$, and 
where the last inequality holds since  for any particular selection of the examples $S' \in (\X \times \Y)^r$ for which $f_{S'}$ is $\epsilon$-bad, 
we have that 
$\Pr_{{S'' \sim \D^{m-r}}}\Big[   \text{$f_{S'}$ is consistent with $S''$ }   \Big]  \le (1-\epsilon)^{m-r}$. This  means that it also
holds true if these examples are selected at random. Then, we get the desired bound since $m^r (1-\epsilon)^{m-r} \le m^r e^{-\epsilon(m-r)} = \delta$, which follows by solving for $\delta$ via our definition of $\epsilon$ above.

\end{proof}

\begin{theorem}\label{thm:compression_for_lists}
Let  $r\leq m$ be integers and $k < |\Y|$. 
Let $\A$ denote a (possibly randomized) algorithm based on an $m \to r$  compression scheme for $k$-lists.  
For any $S \in (\X \times \Y)^m$ denote by $f_S$ the output of $\A$ over   $S$. 
Let $\delta >0$ and set $\epsilon = \frac{r\ln(m)+\ln(1/\delta)}{m-r}$.
Then, for any  distribution $\D$  over $\X \times \Y$,
    \begin{equation} 
            \Pr_{S \sim \D^m, \A}\Big[ \emph{err}_\D(f_S) >  \epsilon
            \Big] \le \Pr_{S \sim \D^m, \A}\Big[   f_S \emph{ is  not consistent with } S
            \Big] + \delta.
    \end{equation} 
\end{theorem}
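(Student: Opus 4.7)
The plan is to derive Theorem \ref{thm:compression_for_lists} as a direct corollary of the already-proved Theorem \ref{thm:compression_for_lists_implication}, by decomposing the event $\{\text{err}_\D(f_S) > \epsilon\}$ according to whether or not $f_S$ happens to be consistent with the training sample $S$. Since $\epsilon$ is exactly the value for which Theorem \ref{thm:compression_for_lists_implication} yields the implication
\[
\Pr_{S \sim \D^m, \A}\bigl[f_S \text{ consistent with } S \;\Rightarrow\; \text{err}_\D(f_S) \le \epsilon\bigr] \ge 1-\delta,
\]
the contrapositive form (as noted in the proof of Theorem \ref{thm:compression_for_lists_implication}) gives
\[
\Pr_{S \sim \D^m, \A}\bigl[f_S \text{ consistent with } S \;\land\; \text{err}_\D(f_S) > \epsilon\bigr] \le \delta.
\]

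Next, I would split the target event using the trivial identity that any event $B$ satisfies $B = (B \land A) \cup (B \land \neg A)$ for any event $A$. Taking $B = \{\text{err}_\D(f_S) > \epsilon\}$ and $A = \{f_S \text{ consistent with } S\}$, a union bound yields
\[
\Pr\bigl[\text{err}_\D(f_S) > \epsilon\bigr] \le \Pr\bigl[\text{err}_\D(f_S) > \epsilon \;\land\; f_S \text{ consistent with } S\bigr] + \Pr\bigl[f_S \text{ not consistent with } S\bigr].
\]
The first summand is bounded by $\delta$ by the preceding observation, and the second summand is exactly the probability appearing on the right-hand side of the desired inequality. Combining these two bounds yields the statement of Theorem \ref{thm:compression_for_lists}.

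There is essentially no technical obstacle here: the heavy lifting (the $m^r (1-\epsilon)^{m-r} \le \delta$ union-bound calculation over subsets of size $r$) was already carried out in the proof of Theorem \ref{thm:compression_for_lists_implication}. The only thing to be careful about is the logical equivalence between the implication form used in Theorem \ref{thm:compression_for_lists_implication} and the conjunction form needed here, which is the same equivalence ($A \Rightarrow B$ is equivalent to $\neg(A \land \neg B)$) that was already explicitly invoked when relating Equation \eqref{eq:const:11} to Equation \eqref{eq:const:22} in the earlier proof. Thus the proof reduces to a one-line union bound after quoting Theorem \ref{thm:compression_for_lists_implication}.
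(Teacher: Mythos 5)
Your proposal is correct and matches the paper's own proof essentially verbatim: the paper likewise splits $\Pr[\text{$f_S$ is $\epsilon$-bad}]$ into the "consistent and bad" and "not consistent" pieces via $\Pr[A]\le\Pr[A\land B]+\Pr[\neg B]$, and then quotes Theorem~\ref{thm:compression_for_lists_implication} to bound the first piece by $\delta$. No meaningful difference in approach.
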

\begin{proof}
Let  $C_\A(S) = \mathbf{1}\big[$ 
     $f_S$\text{ is consistent with }$S\big]$.  First, using the simple fact that for any two events $A$ and $B$ it holds that $\Pr[A] = \Pr[A \land B] + \Pr[A \land \neg B] \le \Pr[A \land B] + \Pr[\neg B]$, we get,

    \begin{align*}
       \Pr_{S \sim \D^m, \A}\left[ f_S \text{ is } \epsilon\text{-bad} 
            \right] &\le  
            \Pr_{S \sim \D^m, \A}\left[ 
            C_\A(S)
            \ \land \ 
            f_S \text{ is } \epsilon\text{-bad} 
            \right]  + \Pr_{S \sim \D^m, \A}\left[ 
            \neg C_\A(S)
            \right].
    \end{align*}

Thus, it remains to prove that,
\begin{align}\label{eq:thm:comp_for_list_1}
     \Pr_{S \sim \D^m, \A}\left[ 
            C_\A(S)
            \ \land \ 
            f_S \text{ is } \epsilon\text{-bad} 
            \right] \le \delta,
\end{align}
which holds true by Theorem \ref{thm:compression_for_lists_implication} above. 
\end{proof}

\begin{algorithm}
\caption{Boosting Weak-to-List Learning}\label{alg:weak_to_list}
\begin{flushleft}
  {\bf Given:} training data $S \in (\X \times \Y)^m$, edge $\gamma > 0$, parameters $T, \eta  > 0$.\\
{\bf Output:} A list $\mu: \X \mapsto \Y^{k-1}$. \\
\end{flushleft}
\begin{algorithmic}[1]
\STATE Let $k$ be the smallest integer such that $\frac{1}{k} < \gamma$. 
\STATE Call Hedge (Algorithm \ref{alg:simple_boost}) with $S$  and parameters $\eta, T$
to get $H: \X \times \Y \mapsto [0,T]$. 
\STATE Construct $\mu: \X|_S \mapsto \Y^{k-1}$  such that for all $x$,
$$
\mu(x) =  \left\{ \ \ell \in \Y \  : \   {H(x,\ell)} > \frac{T}{k}  \right\}.
$$ 
\STATE Output $\mu: \X \to \Y^{k-1}$. \quad 
    \textcolor{gray!60}{{\small \textbackslash\textbackslash  \ For $x \notin S$, truncate $\mu(x)$ to first $k-1$ labels, if contains more.
    }}
\end{algorithmic}
\end{algorithm}
 
 The following lemma describes the list-learning guarantees that can be obtained when given access to a $\gamma$-weak learner (Definition \ref{def:too-weak_PAC}). It is a re-phrasing of Lemma \ref{lemma:wl_to_list} from the main paper, using exact constants. The next paragraph highlights the
assumptions that were made, followed by the proof of the lemma.

Specifically, we assume for simplicity that the learning algorithm does not employ internal randomization. Thus, it can be regarded as a fixed, deterministic mapping from a sequence of $m_0$ unweighted
examples, to a hypothesis $h : \X \rightarrow \Y$. We note that our results remain
valid for a randomized learner as well, yet we assume the above for ease of exposition.

 \begin{lemma}[{Weak-to-List Boosting}]\label{thm:weak-to-list_boost}
 Let $\H \subseteq \Y^\X$ be a hypothesis class.
Let $\gamma, \epsilon, \delta  > 0$, let $k$ be the smallest integer such that $\frac{1}{k} < \gamma$, and denote $\sigma = \gamma - \frac{1}{k}$.
Let  $\D$ be an $\H$-realizable distribution over $\X \times \Y$. 
 Then, when given a sample $S \sim \D^m$ for
$m \ge \frac{16 \ m_0 \ln^2(m)}{\sigma^2\epsilon} + \frac{\ln(2/\delta)}{\epsilon} $, oracle access to a
 $\gamma$-weak learner (Definition \ref{def:too-weak_PAC})  for $\H$ with $m_0 \ge m_0(\frac{\delta}{2T})$
and $T \ge \frac{8\ln(m)}{\sigma^2}$,  and $\eta = \sqrt{\frac{\ln(m)}{2T}}$, Algorithm \ref{alg:weak_to_list}  outputs a list function $\mu: \X \to \Y^{k-1}$ such that  with  probability at least $1-\delta$,
\begin{equation*}
    \Pr_{(x,y) \sim \D}\Big[ y \notin \mu(x) \Big] \le \epsilon.
\end{equation*}
\end{lemma}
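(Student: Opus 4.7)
The plan is to prove this lemma by a sample compression argument that closely mirrors the proof of Theorem \ref{thm:compressionfinal}, treating Algorithm \ref{alg:weak_to_list} as a single ``layer'' of the recursive boosting scheme: it boosts the weak learner once and thresholds the resulting scoring function to extract a short list of plausible labels.

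First, I would verify that $\mu$ is consistent with $S$. Choosing the weak learner sample size $m_0 \ge m_0(\delta/(2T))$ and union bounding over the $T$ rounds of Hedge, with probability at least $1-\delta/2$ every weak hypothesis $h_t$ satisfies $\Pr_{D_t}[h_t(x)=y] \ge \gamma = \tfrac{1}{k} + \sigma$. Realizability of each $D_t$ is immediate: since $\D$ is $\H$-realizable, any reweighting of $S$ is realized by the same target function. Conditioned on this event, Lemma \ref{lemma:remove_one_label} applied with edge $\sigma$ (in place of its $\gamma$) gives $H(x,y)/T \ge \tfrac{1}{k} + \tfrac{\sigma}{2} > \tfrac{1}{k}$ for every $(x,y) \in S$, so by construction $y \in \mu(x)$ and consistency holds.

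Second, I would bound the list size. For every $x$, each $h_t$ assigns exactly one label, so $\sum_{\ell \in \Y} H(x,\ell) = T$. Hence strictly fewer than $k$ labels can satisfy $H(x,\ell) > T/k$, and therefore $|\mu(x)| \le k-1$ on \emph{every} $x$, which also makes the truncation step in the algorithm purely defensive.

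Finally, I would cast $\mu$ as the output of an $m \to r$ compression scheme for $(k-1)$-lists. Under the simplifying assumption that the weak learner is deterministic, each $h_t$ is a deterministic function of the $m_0$ unweighted examples on which it was trained, and $\mu$ is a deterministic function of $h_1,\ldots,h_T$, so the entire list function can be reconstructed from $r = T m_0 = O\!\left(m_0 \ln(m)/\sigma^2\right)$ labeled examples. Invoking Theorem \ref{thm:compression_for_lists_implication} with list size $k-1$ and failure probability $\delta/2$ gives, with probability at least $1-\delta/2$ over $S$, the bound $\text{err}_\D(\mu) \le (r\ln m + \ln(2/\delta))/(m-r)$ whenever $\mu$ is consistent with $S$. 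Combining both high-probability events by a union bound, and plugging in $r$ and the sample complexity assumed in the lemma, yields $\text{err}_\D(\mu) \le \epsilon$ with probability at least $1-\delta$, as required. The main obstacle is the careful parameter tracking in the union bound: one must drive the per-call confidence for the weak learner down to $\delta/(2T)$ (so $m_0$ depends on $T$), while still keeping the compression budget $r = Tm_0$ small enough relative to $m$ for the generalization bound $(r\ln m + \ln(2/\delta))/(m-r)$ to be of order $\epsilon$. Beyond this bookkeeping, the argument is a direct specialization of the boosting-plus-compression template already established in Theorem \ref{thm:compressionfinal}.
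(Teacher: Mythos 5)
Your proposal matches the paper's proof essentially step for step: you verify consistency of $\mu$ with $S$ via a union bound over the $T$ Hedge rounds at per-call confidence $\delta/(2T)$ and Lemma~\ref{lemma:remove_one_label}, bound the list size to $k-1$ via the budget $\sum_\ell H(x,\ell)\le T$, view $\mu$ as an $m\to T m_0$ compression scheme for $(k-1)$-lists under the deterministic-learner assumption, and apply the compression generalization bound with the remaining $\delta/2$. The only (correct and harmless) refinement over the paper is your observation that the list-size bound holds for every $x\in\X$, not just $x$ appearing in $S$, so the truncation step in Algorithm~\ref{alg:weak_to_list} is indeed purely defensive.
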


\begin{proof}
By the learning guarantee in Definition \ref{def:too-weak_PAC}, we know that
for every distribution $D$ over the examples in $S$,  when given $m_0$ examples from $D$, 
 {with probability at least $1-\delta/(2T)$} the learner returns $h: \X \to \Y$ such that  
$$
\sum_{i=1}^mD(i) \mathbbm{1}[{h(x_i)= y_i}] \ge   \gamma = \frac{1}{k} + \sigma.
$$
In line 2 of Algorithm \ref{alg:weak_to_list}, we call Algorithm \ref{alg:simple_boost} with $S$ and the weak learner, with $T$, $\eta$ and $m_0$ as defined above. {Then, we follow the same proof as in Lemma \ref{lemma:remove_one_label}, applied to $S$ such that the guarantees of the weak learner holds with probability at least $1-\delta/(2T)$ as above. Then, taking the union bound over all $T$ calls to the weak learner, by the same argument of Lemma \ref{lemma:remove_one_label} 
we get that with probability at least $1-\delta/2$ it holds} that for all $(x,y) \in S$, 
\begin{equation}\label{eq:thm:wl_list_equivalence_1}
\frac{1}{T}\sum_{t=1}^T \mathbbm{1}[{h_t(x) =  y}] \ge  \frac{1}{k} +\frac{\sigma}{2},
\end{equation}
where $h_1,...,h_T$ are the weak hypotheses obtained via Algorithm \ref{alg:simple_boost} and that comprise of $H$, the output of Algorithm \ref{alg:simple_boost}.
Moreover, notice that since 
the weak learning algorithm can be regarded as a fixed, deterministic mapping from a sequence of $m_0$ unweighted examples to a hypothesis,  hypothesis $h_t$ can be expressed via these very $m_0$ examples.  
Notice that $H$ can be represented by $T \cdot m_0$ examples. Thus, the list function $\mu$ defined in Line 4 of Algorithm \ref{alg:weak_to_list}, can also be represented by $T \cdot m_0$ examples.

Furthermore, we show that for every $(x,y) \in S$ the list $\mu(x)$ is indeed of size at most $k-1$. This is true 
since for every 
$x \in \X$ by the definition of $H$ it holds that 
$\sum_{\ell \in \Y} H(x,\ell) \le T$, and so  we have that for all $(x,y) \in S$ it holds that at most $k-1$ labels $\ell \in \Y$ can satisfy $H(x,\ell) > T/k$, and thus $|\mu(x)| \le k-1$. 

Lastly, we observe that $\mu$ is consistent with $S$. That is, by Equation \eqref{eq:thm:wl_list_equivalence_1} and the definition of $\mu$ 
we know that for every $(x,y) \in S$, we have $y \in \mu(x)$.

Overall, we get that {with probability at least $1-T \cdot \delta/(2T)= 1- \delta/2$} over the 
random choice of $Tm_0$ examples,
it holds that the list function 
$\mu$ is consistent with the sample $S$. Moreover, it can be represented using only $r$ examples, where $r = Tm_0$.

We can now apply a sample compression scheme bound to obtain the final result. Specifically, we consider the  bound given in  Theorem \ref{thm:compression_for_lists} for a $m \rightarrow r$ sample compression scheme for $k$-lists. Then, if $\mu$ is consistent with $S$ then  
with probability of at least $1-\delta/2$ over the choice of $S$ it also holds that, 
$$
\Pr_{(x,y)\sim \D}\Big[ y \notin \mu(x) \Big] \le \frac{r \ln(m) + \ln(2/\delta)}{m-r}.
$$
Plugging in $r = Tm_0$ from above, and setting 
$$
m = \frac{8 \ m_0 \ln^2(m)}{\sigma^2\epsilon} + \frac{\ln(2/\delta)}{\epsilon} + \frac{8\ln(m)m_0}{\sigma^2},
$$
yields the desired bound. That is, with probability at least $1-\delta$, we have $\Pr_\D[y \not\in \mu(x)] \le \epsilon$.
\end{proof}

\subsection{Proof of Lemma \ref{lemma:list_to_wl}}
\begin{proof}[{Proof of Lemma \ref{lemma:list_to_wl}}]

    First, recall that the given oracle 
  is such that for every $\H$-realizable distribution $\D$, when given  $S \sim \D^{m_{\ell}}$ for 
$m_{\ell} \ge m_{\ell}(\epsilon,\delta)$, it returns $\mu_S: \X \rightarrow \Y^{k}$ such that with probability $1-\delta$,
\begin{equation}\label{eq:thm:list_boost_1}
    \Pr_{(x,y) \sim \D}\bigl[y\in \mu_S(x)\bigr]\geq 1-\epsilon.
\end{equation}

Then, we show that there is a $\gamma$-weak learning algorithm $\W$ for $\H$ where $\gamma:= \frac{1-2\epsilon}{k}$, for any $\epsilon > 0$. 
The construction of the algorithm $\W$ is as follows. First, let  $m_\ell := m_\ell(\epsilon, \delta_0)$ for $\delta_0=1/2$. 
The $\gamma$-weak learner $\W$ has a sample size $m$ to be determined later such that $m \ge m_\ell$, and is given a training set $S \sim \D^m$. Then, the learner $\W$ first calls the $k$-list learner with parameters $\epsilon, \delta_0, m_\ell$ over $S$ and 
 obtains the list function $\mu_S$ that satisfies Equation \eqref{eq:thm:list_boost_1} with probability at least $1-\delta_0 = 1/2$.

Henceforth, we only consider the probability-half event in which we have that Equation \eqref{eq:thm:list_boost_1} holds. Furthermore, let $Z$ denote the set of all elements $(x,y) \sim \D$ for which $y\in \mu_S(x)$. 
Then, notice that 
by randomly picking $j \le k$ and denoting the hypothesis $h_S^j:\X \mapsto \Y$ defined by the $j$-the entry of $\mu_S$, i.e., $h_S^j(x):= \mu_S(x)_j$, 
we get for any $(x,y) \in Z$,
$$
\Pr_{{j \sim \text{Unif}(k)}}\bigl[y=  h_S^j(x)\bigr]\geq \frac{1}{k}.
$$
 
Then,  by   von Neumann’s minimax theorem \cite{neumann1928theorie} it holds that for any distribution $D'$ over $Z$, there \underline{exists} some particular $j_0 \le k$ for which
$$
\Pr_{(x,y) \sim D'}\bigl[y= h_S^{j_0}(x)\bigr]\geq \frac{1}{k}.
$$ 
Since this is true for the distribution $\D$ conditioned on $Z$ as well, the algorithm can pick $j \le k$ uniformly at random and so it will pick $j_0$ with probability $\frac{1}{k}$. Overall we get that when   given $S$ the algorithm can call the list learner to obtain $\mu_S$ and sample $j$ uniformly to obtain $h_S^j: \X \mapsto \Y$ such that,
$$
\Pr_{S \sim \D^{m_\ell}}\left[\Pr_{j \sim \text{Unif}(k)}\left[\Pr_{{(x,y) \sim \D}}\bigl[y= h_S^j(x)\bigr]\geq \frac{1-\epsilon}{k} \right] \ge  \frac{1}{k} \right] \ge 1-\delta_0.
$$
Since we set $\delta_0=1/2$, we get that,
\begin{equation}\label{eq:thm:list_boost_1:pf_2}
    \Pr_{S \sim \D^{m_\ell}, j \sim \text{Unif}(k)}\left[\Pr_{{(x,y) \sim \D}}\bigl[y= h_S^j(x)\bigr]\geq \frac{1-\epsilon}{k} \right] \ge  \frac{1}{2k}.
\end{equation}
Lastly, note that the confidence parameter of $\frac{1}{2k}$  can easily be 
made arbitrarily large  using standard confidence-boosting techniques. That is, for any $\delta > 0$, we can set $q=2k\log(2/\delta)$ and draw $q$ sample sets $S_1,...,S_q$ from $\D^{m_\ell}$ and call the algorithm we described so far to generate hypotheses $h_1,...,h_q$ which satisfy Equation \eqref{eq:thm:list_boost_1:pf_2} with respect to the different samples $S$, and different random draws of $j \sim \text{Unif}(k)$.
 By the choice of $q$, we have that the probability that no $h$ has accuracy at least $\frac{1-\epsilon}{k}$,
 is at most $(1-\frac{1}{2k})^q \le e^{-\frac{q}{2k}} = \delta/2$. 
Thus, we get that with probability at least $1 - \delta/2$  
at least one of these hypothesis is good, in that it has accuracy of at least $\frac{1-\epsilon}{k}$. 
We then pick the best hypothesis as tested over another independent set $S_{q+1}$, which we sample i.i.d. from $\D^r$ where $r = \frac{10\log(2q/\delta)}{\epsilon'^2}$ and $\epsilon'=\epsilon/k$. Then, by 
Chernoff’s inequality we have for each of the $h_1,...,h_q$ that 
with probability $1- \delta/(2q)$ that the gap between its accuracy over $S_{q+1}$ and over $\D$ is at most $\epsilon'/2$. By taking the union bound, we get that this holds with probability $1- \delta/2$ for all $h_1,...,h_q$ simultaneously.

 Thus when we choose the $h$ with the most empirical accuracy over $S_{q+1}$, then with probability $1- \delta$ 
it will have an empirical
accuracy of at least $\frac{1-\epsilon}{k} - \epsilon'/2$, and true accuracy of at least $\frac{1-\epsilon}{k} - \epsilon'$. 
Therefore, with probability at least $1-\delta$ we get that the  hypothesis chosen by our final algorithm  has accuracy of at least $\gamma = \frac{1-2\epsilon}{k}$ over $\D$.  
 
Overall, we get that $\W$ is 
a $\gamma$-weak PAC learner for $\H$, where
$\gamma = \frac{1-2\epsilon}{k}$, and with sample complexity $m = O\left(k \cdot \log(1/\delta)\cdot m_{\ell}(\epsilon, 1/2) + k^2 \cdot \frac{\log(k/\delta)}{\epsilon^2}\right)$.
\end{proof}

\subsection{Proof of Theorem \ref{thm:wl_list_equivalence}}

\begin{proof}[{Proof of Theorem \ref{thm:wl_list_equivalence}}]
    We start with showing that $\gamma(\H) \ge 1/k(\H)$. By assumption, we have that  for any fixed $\epsilon_1,\delta_1 > 0$, 
    when given $m_1:=m_1(\epsilon_1,\delta_1)$ examples $S$ drawn i.i.d. from a $\H$-realizable distribution $\D$, the list learner returns a function $\mu_S: \X \to \Y^{k(\H)}$ such that,
\begin{equation}\label{eq:thm:wl_list_equivalence:pf_1}
    \Pr_{S \sim \D^{m_1}}\Big[\Pr_{(x,y) \sim \D}\bigl[y\in \mu_S(x)\bigr]\geq 1-\epsilon_1 \Big] \ge  1-\delta_1.
\end{equation}

By Lemma \ref{lemma:list_to_wl} we get that for any $\epsilon >0$, there is a $\gamma$-weak learner for $\H$,
where $\gamma:= \frac{1-2\epsilon}{k(\H)}$.  
 
Moreover, by definition of $\gamma(\H)$ as the supremum over all such $\gamma$ values, we have $\gamma(\H) \ge \frac{(1-\epsilon)}{k(\H)}$. Lastly, since this holds for any choice of $\epsilon$, we can take $\epsilon \to 0$  and get $\gamma(\H) \ge \frac{1}{k(\H)}$.

We now show that $\gamma(\H) \le 1/k(\H)$. Let $k$ denote the smallest integer for which $\frac{1}{k} < \gamma(\H)$, and let $\sigma_0 := \gamma(\H) - 1/k > 0$. 
We will now show how to construct a list learner via the given weak learner, for a list size of $k-1$. It suffices to show that, since by the choice of $k$ we have $\gamma(\H) \le 1/(k-1)$, and so it holds that $k(\H) \le k-1 \le 1/\gamma(\H)$ and we get the desired bound of $\gamma(\H) \le 1/k(\H)$. Thus, all that is left is to show the existence of the $(k-1)$-list learner.

Let $\epsilon_1,\delta_1 >0$, and let $m(\epsilon_1, \delta_1)$ be determined as $m$ in  lemma \ref{thm:weak-to-list_boost} for $\epsilon_1,\delta_1$. Set $T= \frac{8\log(m)}{\sigma_0^2}$.  Let $m_0:=m_0(\delta_0 := \delta_1/(2T))$ denote the sample size of a $\gamma$-weak PAC learner  with $\gamma:= \frac{\sigma_0}{2} + \frac{1}{k}$. Denote this weak learner by $\W$. Then, by applying Lemma \ref{thm:weak-to-list_boost} 
with the $\gamma$-weak learner $\W$ and with $\sigma:= \sigma_0$, we obtain the $(k-1)$-list learner, which concludes the proof.

\end{proof}

\subsection{Proof of Theorem \ref{thm:list_boost}}
\begin{proof}[Proof of Theorem \ref{thm:list_boost}]
    The proof follows by constructing a weak learner from the given oracle, and then applying a weak-to-list boosting procedure.

First, by Lemma \ref{lemma:list_to_wl} we get that there is a $\gamma$-weak learning algorithm $\W$ for $\H$ where $\gamma:= \frac{1-2\epsilon_0}{k_0}$.\footnote{We note that although Lemma \ref{lemma:list_to_wl} assumes access to a $k$-List PAC learner, the same argument holds for the {weaker} version of a list learner with list size $k_0$ which depends on $\epsilon_0$, and the proofs of both cases is identical.}

Then, by applying lemma \ref{thm:weak-to-list_boost} with this $\gamma$-weak learner $\W$, we obtain a $(t-1)$-List PAC learner, where $\frac{1}{t} < \gamma \le \frac{1}{t-1}$. Then, we also get that $\frac{1-2\epsilon_0}{k_0} \le \frac{1}{t-1}$ and so $(t-1) \le \left\lfloor \frac{k_0}{1-2\epsilon_0} \right\rfloor$.

Therefore, there is a $k$-List PAC learning algorithm for $\H$ with a \emph{fixed} list size $k=\left\lfloor \frac{k_0}{1-2\epsilon_0} \right\rfloor$. 
\end{proof}

\section{Proof of Theorem \ref{thm:pac_k_ds}}

At a high level, the proof describes a construction of a simple weak  learner which is shown to satisfy our BRG condition (Definition \ref{def:brg_wl}). This implies it is also amenable to our boosting method, which yields the final result given in Theorem \ref{thm:pac_k_ds}.

The construction of the weak learner is based on an object called the
\textit{One-inclusion Graph} (OIG)~\citep*{haussler1994predicting,rubinstein2006shifting} of a hypothesis class, which is often useful  in devising learning algorithms. Typically, one is interested in \emph{orienting the edges} of this graph in a way that results in accurate learning algorithms. As in the analysis used to characterize PAC, and List PAC learning \cite{brukhim2022characterization, charikar2022characterization}, as well as in most applications of the OIG \cite{daniely2014optimal, haussler1994predicting, rubinstein2006shifting}, a good learner corresponds to an orientation with a small maximal \underline{out}-degree. However, here we show that a simpler task of minimizing the \underline{in}-degree is sufficient to obtain a reasonable weak learner, and therefore a strong learner as well.

We start with introducing relevant definitions. 

 The DS dimension was originally introduced by  \cite{daniely2014optimal}.
 Here we follow the formulation given in \cite{brukhim2022characterization}, and so we first introduce the notion of \emph{pseudo-cubes}. 

\begin{definition}[Pseudo-cube]
A class $\H\subseteq \Y^d$ is called a {\em pseudo-cube} 
of dimension~$d$ if it is non-empty, finite and for every $h\in \H$ and $i \in [d]$,
there is an $i$-neighbor~$g \in \H$ of $h$
(i.e., $g(i)\neq h(i)$ and $g(j)=h(j)$ for all $j \neq i$). 
\end{definition}When $\Y = \{0,1\}$, the two notions ``Boolean cube'' and ``pseudo-cube'' coincide:
The Boolean cube $\{0,1\}^d$ is of course a pseudo-cube. 
Conversely, every pseudo-cube $\H\subseteq \{0,1\}^d$ is the entire 
Boolean cube $\H=\{0,1\}^d$. When $\lvert \Y\rvert >2$,
the two notions do not longer coincide.
Every copy of the Boolean cube is a pseudo-cube,
but there are pseudo-cubes that are not Boolean cubes (see \cite{brukhim2022characterization} for further details). We are now ready to define the DS dimension.\begin{definition}[DS dimension]
A set $S \in \X^n$ is
{\em $DS$-shattered} by $\H\subseteq \Y^\X$ 
if $\H|_S$ contains an $n$-dimensional
pseudo-cube.
The DS dimension $d_{DS}(\H)$ is the maximum size of a DS-shattered sequence.
\end{definition}

A natural analogue of this dimension in the context of predicting lists of size $k$ is as follows:

\begin{definition}[$k$-DS dimension \cite{charikar2022characterization}]
    \label{def:k-ds-dim}
    Let $\H \subseteq \Y^\X$ be a hypothesis class and let $S \in \X^d$ be a sequence.
    We say that $\H$ $k$-DS shatters $S$ if there exists $\F \subseteq \H, |\F|<\infty$ such that $\forall f \in \F|_S ,\; \forall i \in [d],$ $f$ has at least $k$ $i$-neighbors. The $k$-DS dimension of $\H$, denoted as $\dds=\dds(\H)$, is the largest integer $d$ such that $\H$ $k$-DS shatters some sequence $S \in \X^d$.
\end{definition}

Observe that the two definitions above differ only in the number of $i$-neighbors they require each hypothesis to have, and that $d_{DS} = d_{DS}^k$ for $k=1$. Moreover, there are simple hypothesis classes with infinite DS dimension, yet finite $k$-DS dimension for $k \ge 2$, see Example 3 in \cite{charikar2022characterization}.

Next, we define the object called the
\textit{One-inclusion Graph} (OIG)~\citep*{haussler1994predicting,rubinstein2006shifting} of a hypothesis class, and related definitions which will be used for the construction of the weak learner. 
\begin{definition}[One-inclusion Graph~\citep*{haussler1994predicting,rubinstein2006shifting}] \label{def:oig}
The {\it one-inclusion} graph  
of $\H\subseteq \Y^n$ is a hypergraph
$\mathcal{G}(\H)=(V,E)$ that is defined as follows.\footnote{We use
the term ``one-inclusion graph'' although it is actually a hypergraph.}
The vertex-set is $V=\H$.
For each  $i\in [n]$ and $f:{[n]\setminus\{i\}} \to \Y$,
let $e_{i,f}$ be the set of all $h\in \H$ that agree with $f$ on $[n]\setminus \{i\}$.
 The edge-set is
\begin{equation} \label{eq:edge_set}
 E = \big\{e_{i,f} : i\in [n], f:{[n]\setminus\{i\}} \to \Y,
 e_{i,f} \neq \emptyset  \big\} .
\end{equation}
    We say that the edge $e_{i,f}\in E$ is in the direction $i$, and is adjacent to/contains the vertex $v$ if $v \in e_{i,f}$. Every vertex $h\in V$ is adjacent to exactly $m$ edges. The size of the edge $e_{i,f}$ is the size of the set $|e_{i,f}|$. 
\end{definition}

We remark that similarly to \cite{brukhim2022characterization, charikar2022characterization}, edges could be of size one, and each vertex $v$ is contained in exactly $n$ edges.
This is not the standard structure of edges in hypergraphs, but we use this notation
because it provides a better model for learning problems.
 
With respect to the one-inclusion graph, one can think about the \textit{degrees} of its vertices, all of which originally introduced in \cite{charikar2022characterization}, and are   natural generalizations of the degrees defined in \cite{daniely2014optimal, brukhim2022characterization}

\begin{definition}[$k$-degree]
    \label{def:k-degree}
    Let $\G(\H)=(V,E)$ be the one-inclusion graph of $\H \subseteq \Y^m$. The $k$-degree of a vertex $v \in V$ is
    \begin{align*}
        \degree(v) &= |\{e \in E: v \in e, |e|>k\}|.
    \end{align*}
\end{definition}

We now define the notion of \textit{orienting} edges of a one-inclusion graph to lists of vertices they are adjacent to. As alluded to earlier, an orientation corresponds to the behavior of a (deterministic) learning algorithm while making predictions on an unlabeled test point, given a set of labeled points as input.
\begin{definition}[List orientation]
    \label{def:list-orientation}
    A list orientation $\sigmak$ of the one-inclusion graph $\G(\H) = (V,E)$ having list size $k$ is a mapping $\sigmak:E \to \{V'\subseteq V: |V'|\le k\}$ such that for each edge $e \in E$, $\sigmak(e) \subseteq e$.
\end{definition}

The $k$-out-degree of a list orientation $\sigmak$ is defined as:
\begin{definition}[$k$-out-degree of a list orientation]
    Let $\G(\H)=(V,E)$ be the one-inclusion graph of a hypothesis class $\H$, and let $\sigmak$ be a $k$-list orientation of it. The $k$-out-degree of $v \in V$ in $\sigmak$ is
    \begin{equation}
        \label{eqn:vertex-k-out-degree}
        \outdeg(v;\sigmak) = |\{e:v \in e, v \notin \sigmak(e)\}|.
    \end{equation}
    The maximum $k$-out-degree of $\sigmak$ is
    \begin{equation}
        \label{eqn:max-k-out-degree}
        \outdeg(\sigmak) = \sup_{v \in V} \outdeg(v;\sigmak).
    \end{equation}
\end{definition}

The following lemmas demonstrates  how a bound on the dimensions helps one greedily construct a list orientation of small maximum $k$-out-degree. 

\begin{lemma}[Lemma 3.1, \cite{charikar2022characterization}]
    \label{lem:dds+1-outdeg-dds}
    If $\H \subseteq \Y^{d+1}$ has $k$-DS dimension at most $d$, then there exists a $k$-list orientation $\sigmak$ of $\G(\H)$ with $\outdeg(\sigmak)\le d$.
\end{lemma}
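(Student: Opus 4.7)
The plan is to prove the lemma by induction on $|\H|$ via a peeling argument that removes one vertex at a time (the case of infinite $\H$ follows at the end by a standard compactness/chain argument on finite subfamilies). The base case $|\H| \le 1$ is immediate: every edge of $\G(\H)$ is then a singleton, and setting $\sigma^k(e) := e$ yields out-degree $0$ at every vertex.

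For the inductive step the key combinatorial observation is: there exists a vertex $v^\ast \in \H$ that is incident to at most $d$ \emph{big} edges, where we call an edge $e \ni v^\ast$ of $\G(\H)$ \emph{big} if $|e| > k$. Each vertex is incident to exactly $d+1$ edges (one per direction in $[d+1]$); so if no such $v^\ast$ existed, every $v \in \H$ would have at least $k$ $i$-neighbors in $\H$ for every direction $i \in [d+1]$, and taking $\F := \H$ would witness that $\H$ $k$-DS shatters the full sequence $[d+1]$, contradicting the hypothesis $\dds(\H) \le d$. In particular $v^\ast$ has at least one \emph{small} edge $e^\ast$ (with $|e^\ast| \le k$). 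Since $\dds(\H \setminus \{v^\ast\}) \le \dds(\H) \le d$ (any shattering witness for the smaller class is also one for $\H$), the induction hypothesis supplies a $k$-list orientation $\sigma'$ of $\G(\H \setminus \{v^\ast\})$ with maximum $k$-out-degree $\le d$.

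The next step is to lift $\sigma'$ to an orientation $\sigma^k$ of $\G(\H)$ as follows: on edges of $\G(\H)$ not containing $v^\ast$, copy $\sigma'$; on $e^\ast$, set $\sigma^k(e^\ast) := e^\ast$ (size $\le k$, includes $v^\ast$); on a singleton edge $\{v^\ast\}$ of $\G(\H)$, set $\sigma^k(e) := \{v^\ast\}$; and on any remaining edge $e \neq e^\ast$ with $v^\ast \in e$ and $|e| \ge 2$, set $\sigma^k(e) := \sigma'(e \setminus \{v^\ast\})$, which is a well-defined subset of $e$ of size $\le k$. Verifying out-degrees then splits in two parts. For $v^\ast$: it lies in $\sigma^k$ of $e^\ast$ and of every singleton edge $\{v^\ast\}$, so its out-degree is at most the number of big non-$e^\ast$ edges through $v^\ast$, which is $\le d$ by the choice of $v^\ast$. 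For any $v \neq v^\ast$: every incident edge $e$ in $\G(\H)$ corresponds naturally to an edge $e'$ of $\G(\H \setminus \{v^\ast\})$ (either $e' = e$ or $e' = e \setminus \{v^\ast\}$), and the construction is designed so that $v \in \sigma^k(e) \iff v \in \sigma'(e')$ for all edges $e \neq e^\ast$, with the bonus that $v \in \sigma^k(e^\ast)$ automatically when $v \in e^\ast$; hence $\outdeg(v;\sigma^k) \le \outdeg(v;\sigma') \le d$. The main delicacy---and essentially the only place where care is required---is the bookkeeping of singleton edges $\{v^\ast\}$, which have no counterpart in $\G(\H \setminus \{v^\ast\})$ and must be oriented directly to contain $v^\ast$; apart from this the construction is entirely dictated by the inductive orientation, and the whole argument is driven by the ``fully $k$-pseudo-cube'' characterization of the $k$-DS dimension.
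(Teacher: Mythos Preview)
The paper does not give its own proof of this lemma; it is quoted as Lemma~3.1 of \cite{charikar2022characterization} and used as a black box. Your peeling argument is precisely the approach taken in that reference (and in the earlier one-inclusion-graph literature it extends), so there is nothing to compare: your route \emph{is} the standard one.

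There is one small slip in your bookkeeping for $\outdeg(v^\ast;\sigma^k)$. You claim it is at most the number of big edges through $v^\ast$, but under your construction any \emph{small} non-singleton edge $e\neq e^\ast$ through $v^\ast$ is oriented via $\sigma'(e\setminus\{v^\ast\})$, which does not contain $v^\ast$, and hence also contributes to the out-degree. The conclusion $\outdeg(v^\ast;\sigma^k)\le d$ is nevertheless correct, for the simpler reason that $v^\ast$ has exactly $d+1$ incident edges and lies in $\sigma^k(e^\ast)$. If you prefer to make your stated count literally true, orient \emph{every} small edge $e\ni v^\ast$ by $\sigma^k(e):=e$ (not just $e^\ast$); this only helps the other vertices as well.
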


We now describe a list version of the one-inclusion algorithm below, originally introduced by \cite{charikar2022characterization}.
\begin{algorithm}[H]
\caption{The one-inclusion list algorithm for $\H\subseteq\Y^{\X}$} \label{algo:one-incl_list}
\hspace*{\algorithmicindent} 
\begin{flushleft}
  {\bf Input:} An $\H$-realizable sample $U = \big((x_1, y_1),\ldots,(x_n, y_n)\big)$. \\
{\bf Output:} A $k$-list hypothesis $\mu^k_U:\X \to \{Y \subseteq \Y: |Y|\le k\}$. \\
\ \\
For each $x \in \X$, the $k$-list $\mu^k_S(x)$ is computed as follows:
\end{flushleft}
\begin{algorithmic}[1]
\STATE Consider the class of all patterns over the \emph{unlabeled data}
{$\H|_{(x_1,\ldots,x_n,x)} \subseteq \Y^{n+1}$}.
\STATE Find a $k$-list orientation $\sigmak$ of $\G(\H|_{(x_1,\ldots,x_n,x)})$ that \textit{minimizes} the \textit{maximum} $k$-out-degree.
\STATE  Consider the following edge defined by revealing all labels in $U$:
\[e =\{ h \in \H|_{(x_1,\ldots,x_n,x)} :  \forall i \in [n]  \ \ 
h(i) =y_i\}.\]
\STATE Set $\mu^k_U(x) = \{h(x):h \in \sigmak(e)\}$.
\end{algorithmic}
\end{algorithm}

\begin{lemma}
    \label{lemma:prop15_for_list}
    Let $\H \subseteq \Y^\X$ be a hypothesis class, 
    let $\D$ be an $\H$-realizable distribution over $\X \times \Y$, 
    and let $k, n >0 $ be integers. 
    Let $M$ be an upper bound on the maximum $k$-out-degree of all orientations $\sigmak$ chosen by Algorithm \ref{algo:one-incl_list}, and let $\mu_U^k$ denote its output for an input sample $U \sim \D^n$. Then,
    \begin{align*}
        \Pr_{(U,(x,y)) \sim \D^{n+1}}\left[\mu^k_U(x) \not\owns y \right] \leq \frac{M}{n+1}.
    \end{align*}
\end{lemma}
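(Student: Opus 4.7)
The plan is to follow the classical one-inclusion graph argument, adapted to the list-valued orientation $\sigmak$, using exchangeability of the i.i.d.\ sample together with the global $k$-out-degree bound $M$. Formally, since $(U,(x,y)) \sim \D^{n+1}$ is an i.i.d.\ draw, its distribution is exchangeable: relabeling a uniformly random index among the $n+1$ draws as the ``test point'' does not change the law. Hence it suffices to show that for every fixed realizable sequence $z_{1:n+1} = ((x_1,y_1),\ldots,(x_{n+1},y_{n+1}))$,
\[
\frac{1}{n+1}\sum_{i=1}^{n+1} \mathbbm{1}\bigl[y_i \notin \mu^k_{U_i}(x_i)\bigr] \;\le\; \frac{M}{n+1},
\]
where $U_i$ denotes the sample $z_{1:n+1}$ with $(x_i,y_i)$ removed. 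Taking expectations then yields the lemma.

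The key observation is that Step 1 of Algorithm~\ref{algo:one-incl_list} forms the restricted class $\H|_{(x_1,\ldots,x_n,x)}$, which depends only on the \emph{unordered set} of unlabeled points $\{x_1,\ldots,x_{n+1}\}$. Therefore, the one-inclusion graph $\G(\H|_{(x_1,\ldots,x_{n+1})})$ and the orientation $\sigmak$ selected by the algorithm are the \emph{same} object for all $i \in [n+1]$; only the choice of which direction $i$ is ``unrevealed'' changes. Let $h^\star \in \H|_{(x_1,\ldots,x_{n+1})}$ be the realizable labeling $h^\star(j) = y_j$ (which exists by $\H$-realizability of $\D$). When the test point is $x_i$, the edge $e$ defined in Step 3 of the algorithm is precisely the edge $e_{i,\,h^\star|_{[n+1]\setminus\{i\}}}$ along direction $i$ that contains $h^\star$. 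By construction, $\mu^k_{U_i}(x_i) = \{h(x_i) : h \in \sigmak(e_{i,\,h^\star|_{[n+1]\setminus\{i\}}})\}$, so the algorithm errs on index $i$ if and only if $h^\star \notin \sigmak(e_{i,\,h^\star|_{[n+1]\setminus\{i\}}})$.

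Counting over $i$, the total number of errors equals
\[
\sum_{i=1}^{n+1}\mathbbm{1}\bigl[h^\star \notin \sigmak(e_{i,\,h^\star|_{[n+1]\setminus\{i\}}})\bigr] \;=\; \outdeg(h^\star;\sigmak) \;\le\; \outdeg(\sigmak) \;\le\; M,
\]
by Definitions of the $k$-out-degree and the assumed uniform bound $M$. Dividing by $n+1$ and averaging over $z_{1:n+1} \sim \D^{n+1}$ completes the proof.

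The only delicate point is the equivariance used in the second paragraph: one must verify that the orientation $\sigmak$ returned by the minimization in Step 2 of Algorithm~\ref{algo:one-incl_list} does not depend on which index is held out. This is immediate because the algorithm's input at that stage is the set $\H|_{(x_1,\ldots,x_n,x)}$, which is symmetric in the combined $n+1$ unlabeled points. (If ties in the minimization are broken by an arbitrary rule, one fixes the same tie-breaking rule in advance so that $\sigmak$ is a deterministic function of the unordered set of unlabeled points.) Given this, the rest of the argument is a direct counting argument against the out-degree bound, exactly mirroring the classical Haussler--Littlestone--Warmuth analysis and its list extension in~\cite{charikar2022characterization}.
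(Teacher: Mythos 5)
Your proof is correct and takes essentially the same approach as the paper: a leave-one-out symmetrization reducing to a fixed realizable sample, followed by identifying the realizable labeling as a vertex in the one-inclusion graph and bounding the error count by the $k$-out-degree of that vertex, which is at most $M$. Your explicit remark about the orientation $\sigmak$ depending only on the unordered set of unlabeled points (and the tie-breaking caveat) is a useful clarification of a step the paper uses implicitly, but it is not a different argument.
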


\begin{proof}
    By the leave-one-out symmetrization argument (Fact~14, \cite{brukhim2021multiclass}) we have,
    \begin{align*}
        \Pr_{(U, (x,y))\sim \D^{n+1} }\left[\mu^k_{U}(x) \not\owns y \right] 
        = \Pr_{(U',i) \sim \D^{n+1} \times \mathrm{Unif}(n+1)}\left[ \mu^k_{U'_{-i}}(x'_i) \not\owns y'_i\right].
    \end{align*}
    It therefore suffices to show that for every sample $U'$ that is realizable by $\H$,
    \begin{equation}
        \Pr_{i \sim \mathrm{Unif}(n+1)}\left[\mu^k_{U'_{-i}}(x'_i) \not\owns y'_i \right] \le \frac{M}{n+1}.
    \end{equation}
    Fix $U'$ that is realizable by $\H$  for the rest of the proof.  Denote by $\sigmak$ the orientation of $\G(\H')$ that the algorithm chooses.

    Let~$y'$ denote the vertex in $\G(\H')$ defined by $y'=(y'_1,\ldots,y'_{n+1})$, and let $e_i$ denote the edge in the $i^{\text{th}}$ direction adjacent to $y'$. Then, we have that
    \begin{align*}
        \Pr_{i \sim \mathrm{Unif}(n+1)}\left[\mu^k_{U'_{-i}}(x'_i)  \not\owns y'_i \right] 
         &= \frac{1}{n+1}\sum_{i=1}^{n+1}\mathbbm{1}\left[\mu^k_{U'_{-i}}(x'_i)  \not\owns y'_i \right] \\
        &= \frac{1}{n+1}\sum_{i=1}^{n+1}\mathbbm{1}\left[\sigmak(e_i) \not\owns y' \right] \\
        &= \frac{\outdeg(y';\sigmak)}{n+1} \\
        &\le \frac{M}{n+1}.
    \end{align*}
\end{proof}

\subsection{Initial phase: pre-processing} 

\begin{lemma}\label{lemma:one_inc_to_wl_0_list}
Let $\H \subseteq \Y^\X$ be a hypothesis class with $k$-DS dimension $d < \infty$.
Then,  for every $\H$-realizable set $S \in (\X \times \Y)^m$,
any $\D$ distribution over $S$,    for a sample $U \sim \D^{n}$ for $n:=d$, 
when given $U \in (\X \times \Y)^n$,    Algorithm \ref{algo:one-incl_list} returns $h_U: \X \mapsto \Y$ such that,
\begin{equation} 
\E_{U \sim \D^d} \left[\Pr_{(x,y)\sim \D}[y \in \mu_U^k(x) ] \right]\ge   \frac{1}{d+1}.
\end{equation}
\end{lemma}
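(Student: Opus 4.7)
The plan is to combine Lemma~\ref{lem:dds+1-outdeg-dds} with the leave-one-out-style bound in Lemma~\ref{lemma:prop15_for_list}, so that everything reduces to a single arithmetic identity. The key structural observation is that if $\H$ has $k$-DS dimension $d$, then for every choice of points $x_1,\dots,x_d,x$ the restricted class $\H|_{(x_1,\dots,x_d,x)} \subseteq \Y^{d+1}$ still has $k$-DS dimension at most $d$, because restricting to a subset of coordinates can only decrease (or preserve) the dimension.

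With this observation in hand, Lemma~\ref{lem:dds+1-outdeg-dds} applies to $\H|_{(x_1,\dots,x_d,x)}$ and produces a $k$-list orientation $\sigma^k$ of its one-inclusion graph whose maximum $k$-out-degree is at most $d$. Since Algorithm~\ref{algo:one-incl_list} explicitly chooses an orientation that \emph{minimizes} the maximum $k$-out-degree, the orientation it actually uses also has maximum $k$-out-degree at most $d$. Hence we may take $M := d$ in Lemma~\ref{lemma:prop15_for_list}.

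Now apply Lemma~\ref{lemma:prop15_for_list} with $n = d$ and $M = d$ to get
\begin{equation*}
\Pr_{(U,(x,y)) \sim \D^{d+1}}\!\left[\mu_U^k(x) \not\owns y\right] \;\le\; \frac{d}{d+1}.
\end{equation*}
Since the inner conditional probability over the fresh $(x,y)$ is the same as the statement's expression, passing to the complementary event gives
\begin{equation*}
\E_{U \sim \D^d}\!\left[\Pr_{(x,y)\sim\D}[y \in \mu_U^k(x)]\right]
\;=\; 1 - \E_{U \sim \D^d}\!\left[\Pr_{(x,y)\sim\D}[\mu_U^k(x) \not\owns y]\right]
\;\ge\; 1 - \frac{d}{d+1} \;=\; \frac{1}{d+1},
\end{equation*}
which is the desired inequality.

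I expect no substantial obstacle: both heavy-lifting facts (the existence of a low-out-degree list orientation when the $k$-DS dimension is bounded, and the leave-one-out bound on the expected error of the one-inclusion list algorithm) are stated above. The only point worth being careful about is the restriction step, namely confirming that the $k$-DS dimension of the projection $\H|_{(x_1,\dots,x_d,x)}$ inherits the bound $d$ from $\H$, which is immediate from the definition of $k$-DS shattering since any pseudo-cube-like structure in $\F \subseteq \H|_{(x_1,\dots,x_d,x)}$ would lift to one in $\H$.
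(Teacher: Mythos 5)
Your proposal is correct and takes exactly the same route as the paper, which simply states that the result ``follows directly by combining Lemma~\ref{lem:dds+1-outdeg-dds} and Lemma~\ref{lemma:prop15_for_list}.'' You have usefully filled in the arithmetic and the observation that the $k$-DS dimension is non-increasing under restriction, both of which the paper leaves implicit.
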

\begin{proof}
    This follows directly by combining Lemma \ref{lem:dds+1-outdeg-dds} and Lemma \ref{lemma:prop15_for_list}
\end{proof}

\begin{lemma}\label{lem:appendix_c:cover_initial_list}
Let $\H \subseteq \Y^\X$ with $k$-DS dimension $d < \infty$, and let $S \in (\X \times \Y)^m$ be  an $\H$-realizable set. 
Denote by $\mathcal{L}$ a $k$-list learning algorithm as given in  Algorithm \ref{algo:one-incl_list}. 
Denote $p=k\cdot q$ and $q=(d+1)\ln(2m)$, and let $r = q d = O(d^2\ln(m))$.
Then, there exists an algorithm $\A$
based on an $m \rightarrow r$ compression scheme for $p$-lists as described next. 
When provided with  $S$, and given oracle access to $\mathcal{L}$, the algorithm yields a list function $\mu: \X \to \Y^{p}$ such that $\mu$ is consistent with $S$.
\end{lemma}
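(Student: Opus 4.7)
The plan is to mimic the peeling construction of Algorithm~\ref{algorithm:first_list}, replacing the weak learner with the one-inclusion list learner $\mathcal{L}$ of Algorithm~\ref{algo:one-incl_list}. I would set $S_1 := S$ and iterate for $j = 1, \ldots, q$, where $q = (d+1)\ln(2m)$. At round $j$, let $\mathcal{U}_j$ denote the uniform distribution over $S_j$. Lemma~\ref{lemma:one_inc_to_wl_0_list} applied to $\mathcal{U}_j$, which is $\H$-realizable because $S_j \subseteq S$, gives
\[
\mathbb{E}_{U \sim \mathcal{U}_j^d}\Big[\Pr_{(x,y) \sim \mathcal{U}_j}[y \in \mu_U^k(x)]\Big] \;\ge\; \frac{1}{d+1},
\]
so by averaging there exists a size-$d$ subset $U_j \subseteq S_j$ such that $\mu_{U_j}^k$ covers at least a $\tfrac{1}{d+1}$ fraction of the examples in $S_j$. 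The algorithm would select such a $U_j$ deterministically, e.g.\ the lexicographically first maximizer, and then peel off every $(x,y) \in S_j$ with $y \in \mu_{U_j}^k(x)$ to form $S_{j+1}$.

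Next I would verify full coverage via the standard calculation
\[
|S_{q+1}| \;\le\; \left(1 - \tfrac{1}{d+1}\right)^q m \;\le\; e^{-q/(d+1)} m \;=\; \tfrac{1}{2} \;<\; 1,
\]
so $S_{q+1} = \emptyset$ and every $(x,y) \in S$ is covered at some round $j \le q$. The reconstructor then outputs
\[
\mu(x) \;=\; \bigcup_{j=1}^{q} \mu_{U_j}^k(x),
\]
which has size at most $qk = p$ for every $x$, and satisfies $y \in \mu(x)$ for all $(x,y) \in S$ by construction. The compressed sample is $\bigcup_{j=1}^{q} U_j \subseteq S$ of total size $qd = r$, and since each $\mu_{U_j}^k$ is determined by $U_j$ alone via Algorithm~\ref{algo:one-incl_list}, the output $\mu$ is indeed recoverable from this small subset of $S$.

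The main obstacle I expect is formalizing the compression-scheme semantics carefully. The reconstructor receives only the compressed $r$ examples, not $S$ nor any round-by-round structure, so I would fix a canonical encoding, say concatenating $U_1, \ldots, U_q$ as an ordered tuple, so that the reconstructor can parse the $r$ examples back into the blocks $U_j$ and invoke $\mathcal{L}$ on each. Beyond this bookkeeping, the averaging step rests on Lemma~\ref{lemma:one_inc_to_wl_0_list}, which itself uses the $k$-DS bound through Lemma~\ref{lem:dds+1-outdeg-dds}, and the coverage guarantee is a standard greedy-cover estimate; these are the easy parts.
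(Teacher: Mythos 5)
Your proposal is correct and follows essentially the same route as the paper: the same iterative peeling, the same averaging argument from Lemma~\ref{lemma:one_inc_to_wl_0_list} to extract a size-$d$ subset covering a $\tfrac{1}{d+1}$-fraction at each round, the same $(1-\tfrac{1}{d+1})^q m < 1$ calculation, and the same final concatenated list $\mu = \bigcup_j \mu_{U_j}^k$ represented by the $qd = r$ compressed examples. The remark about fixing a canonical ordered encoding for the reconstructor is a reasonable bookkeeping clarification that the paper leaves implicit.
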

\begin{proof}
We describe an algorithm 
that sequentially finds $q$ subsets of examples $S'_1,...,S'_{q} \subseteq S$ each of size $d$, and returns a list $\mu:\X \mapsto \Y^{k}$, using $r$ examples. 
First, let $\U$ denote the uniform
distribution over the $m$ examples in $S$ and let  $\alpha = \frac{1}{d+1}$. 
    By Lemma \ref{lemma:one_inc_to_wl_0_list} applied to the distribution $\U$, for a random sample $S'_1 \sim \U^{d}$, in expectation at least $\alpha$ of
the examples $(x,y) \in S$ satisfy that $y \in \mu_1(x)$, where $\mu_1$ is the output of $\mathcal{L}$ over $S'_1$. 
 In particular, there exists $S'_1$
for which the corresponding  $\mu_1$ covers at least $\alpha m$ examples. 
 Algorithmically,  it can be found via a brute-force search over all possible $m^d$ subsets $S'_1 \subseteq S$. 
Next, remove
from $S$ all examples $(x
, y)$ for which $y \in \mu_1(x)$ and repeat the same reasoning on
the remaining sample. This way at each step $j$ we find a sample $S'_j$
and a list $\mu_j = \L(S'_j)$
that covers at least an $\alpha$-fraction of the remaining examples. After $q$ steps, 
 all examples in S are covered since $|S_{q}| \le (1-\frac{1}{d+1})^{q} m < e^{-\frac{q}{d+1}}m < 1$. 

The final list returned by our method is ${\mu}$ defined by the concatenation:
\[
{\mu}(x) = \bigcup_{j=1}^q \mu_j(x),
\]
and is based only on the examples in $S_q',...,S'_q$ which is a total of $r$ examples as claimed.
     
\end{proof}

\subsection{Constructing a weak learner for \emph{wrong} labels} 

The following lemmas demonstrates  how a bound on the dimensions helps one greedily construct a list orientation of small maximum $(p-1)$-out-degree, for a class with $p$ labels. 
 
\begin{lemma}
    \label{lem:outdeg-p}
    Let $p,n \in \mathbb{N}$ and assume $\H \subseteq [p]^{n}$ is a hypothesis class with a $(p-1)$-DS dimension at most $d$, where $d < n$.
    Then, there exists an orientation $\sigma^{p-1}$ of $\G(\H)$ with $\dout^{p-1}(\sigma^{p-1})\le  4(p-1)^2  d$.
\end{lemma}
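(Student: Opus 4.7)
The plan is to reduce the problem to a Sauer-Shelah-style bound on the average $(p-1)$-degree of $\H$ and then invoke Hall's marriage theorem. First, observe that any edge $e$ of $\G(\H)$ with $|e|\le p-1$ is trivially orientable by setting $\sigma^{p-1}(e)=e$, contributing zero to any vertex's $(p-1)$-out-degree. Hence only the size-$p$ edges $E_p$ of $\G(\H)$ matter, and for each $e\in E_p$ the orientation amounts to choosing a single ``excluded'' vertex $v_e\in e$ (so $\sigma^{p-1}(e)=e\setminus\{v_e\}$). The goal becomes producing an assignment $e\mapsto v_e\in e$ with $|\{e\in E_p:v_e=u\}|\le 4(p-1)^2 d$ for every $u\in\H$.

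Second, I would invoke Hall's marriage theorem for the bipartite $b$-matching between $E_p$ and $\H$ with vertex capacities $C:=4(p-1)^2 d$. Such an assignment exists if and only if, for every subclass $\mathcal{G}\subseteq\H$, the number of size-$p$ edges of $\G(\H)$ contained in $\mathcal{G}$---equivalently, the number of size-$p$ edges of $\G(\mathcal{G})$, which I denote $E_p(\mathcal{G})$---satisfies $E_p(\mathcal{G})\le C\cdot|\mathcal{G}|$. Since restriction to a subclass cannot increase the $(p-1)$-DS dimension, it suffices to establish the following Sauer-Shelah-type bound for every $\mathcal{G}\subseteq[p]^n$ with $d^{p-1}_{DS}(\mathcal{G})\le d$:
\begin{equation*}
E_p(\mathcal{G})\ \le\ 4(p-1)^2 d\cdot|\mathcal{G}|.
\end{equation*}
Equivalently, the average $(p-1)$-degree of $\mathcal{G}$ is at most $4p(p-1)^2 d$.

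Third, to prove this Sauer-Shelah-type inequality, I would use an extraction-by-contradiction argument leveraging Lemma~\ref{lem:dds+1-outdeg-dds}. Writing $D(v):=\{i\in[n]: v \text{ has $p-1$ $i$-neighbors in }\mathcal{G}\}$, assume for contradiction that $\mathbb{E}_{v\sim\mathcal{G}}|D(v)|>4p(p-1)^2 d$. Then sampling a uniformly random coordinate set $S\subseteq[n]$ of size $d+1$ produces, with positive probability, a subfamily $\F\subseteq\mathcal{G}$ such that every $f\in\F|_S$ has at least $p-1$ $i$-neighbors in $\F|_S$ in every direction $i\in S$, witnessing that $S$ is $(p-1)$-DS shattered and contradicting $d^{p-1}_{DS}(\mathcal{G})\le d$. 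Alternatively, one may apply Lemma~\ref{lem:dds+1-outdeg-dds} directly to $\mathcal{G}|_S$ to bound the $(p-1)$-out-degrees there and then lift the bound back to $\mathcal{G}$ by averaging over $S$; the $(p-1)^2$ factor emerges from a union bound over the $p-1$ distinct neighbors that must simultaneously survive the projection to $S$ in each of the $d+1$ coordinates.

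\textbf{Main obstacle.} The crux is the extraction/averaging step. Although any $i$-neighbor $u$ of $v$ in $\mathcal{G}$ (for $i\in S$) automatically restricts to an $i$-neighbor of $v|_S$ in $\mathcal{G}|_S$, distinct neighbors of $v$ in $\mathcal{G}$ can collapse to the same restriction in $\mathcal{G}|_S$, so fewer than $p-1$ distinct neighbors may be visible after projection. Carefully controlling this collapse, and making the quantitative accounting tight enough to yield the explicit constant $4(p-1)^2 d$, is the main technical step.
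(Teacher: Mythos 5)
Your plan takes a genuinely different route from the paper. The paper's proof is essentially two lines: it first observes that for $\H \subseteq [p]^n$, a set $S$ is $(p-1)$-DS shattered if and only if $\H|_S = [p]^S$, so the $(p-1)$-DS dimension coincides with the $(p-1)$-exponential dimension; it then invokes Corollary 6.5 of \cite{charikar2022characterization}, which directly provides a $(p-1)$-list orientation with maximum $(p-1)$-out-degree at most $4(p-1)^2 d^{p-1}_E(\H)$. You, instead, attempt to re-derive an orientation bound from first principles: restrict attention to size-$p$ edges, reformulate orientability as a capacitated $b$-matching via Hall's theorem, and then establish a Sauer--Shelah-type bound on the number of size-$p$ edges in every subclass. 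The first two reductions are correct and cleanly executed: edges of size at most $p-1$ contribute nothing to $(p-1)$-out-degree, and your vertex-side reformulation of the defect-Hall condition is equivalent to the usual edge-side one (take $\mathcal{G} = \bigcup_{e\in E'} e$ in one direction and $E' = E_p(\mathcal{G})$ in the other).

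However, there is a genuine gap at the third step. The Sauer--Shelah-type degree bound is only sketched, and it is the entire content of the lemma; without it, nothing has been proved. Moreover, your ``main obstacle'' paragraph misidentifies where the work actually lies. The collapse you describe does not occur: if $u \neq u'$ are both $i$-neighbors of $v$ in $\mathcal{G}$ and $i \in S$, then $u|_S$ and $u'|_S$ agree with $v|_S$ on $S\setminus\{i\}$ but take distinct values at coordinate $i$, so they remain distinct $i$-neighbors of $v|_S$ in $\mathcal{G}|_S$. The real difficulty is closure: $(p-1)$-DS shattering of $S$ (equivalently, $\H|_S = [p]^S$) requires that \emph{every} pattern in $\F|_S$ have a complete $i$-neighborhood in $\F|_S$ for every $i$, not merely the one distinguished $v|_S$ you sampled. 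Extracting such a subfamily closed under neighbor-taking from a high-average-degree class is a genuine shifting or downward-closure argument, and the explicit constant $4(p-1)^2 d$ has to come out of that accounting. As written, the proposal neither supplies that argument nor correctly diagnoses the obstacle, so it does not establish the lemma; the paper's short route, via the $(p-1)$-DS/$(p-1)$-exponential dimension equivalence for a $p$-valued class plus the known orientation corollary, avoids this machinery altogether.
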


\begin{proof}
Notice that $\H$ is finite, and so the number of vertices in $\G(\H)$ is finite.  We consider another notion of dimension called the \emph{$k$-Exponential dimension}. We say that $S \subseteq \X$ is $k$-exponential shattered 
by $\H \subseteq \Y^\X$ if $\lvert \H|_S \rvert \geq (k +1 )^{|S|}$.
The $k$-exponential dimension $d_{E}^k(\H)$ is the maximum size
of an $k$-exponential shattered sequence.

Notice that for the case that $\Y = [p]$ and $k=p-1$, we have that a set $S$ is  $(p-1)$-exponential shattered if $\lvert \H|_S \rvert \geq p^{|S|}$. Observe that this definition coincides with that of the $(p-1)$-DS dimension shattering. That is, a set $S$ is  $(p-1)$-DS shattered if for every $h \in \H|_S$ and every $i \in S$ it holds that $h$ has exactly $p-1$ neighbors that agree with it on $[n] \setminus \{i\}$. This definition implies that $\lvert \H|_S \rvert \geq p^{|S|}$.

Therefore, for $\H \subseteq [p]^n$ we have that 
$$
d^{p-1}_{DS}(\H) = d^{p-1}_E(\H).
$$

Then, by Corollary 6.5 of \cite{charikar2022characterization} we get that there is an there is a $(p-1)$-list orientation of $\G(\H)$ with maximum $(p-1)$-out-degree
at most $4(p-1)^2 d^{p-1}_E(\H)$. Replacing $d^{p-1}_E(\H)$ by $d^{p-1}_{DS}(\H)$ yields the desired bound.
\end{proof}

The following lemma is analogous to Lemma \ref{lemma:one_inc_to_wl_0_list}.

\begin{lemma}\label{lemma:one_inc_to_wl_p_list}
Let $p \in \mathbb{N}$ and assume $\H \subseteq [p]^\X$ is a hypothesis class with a $(p-1)$-DS dimension at most $d$.
For every $\H$-realizable set $S \in (\X \times [p])^m$,
any $\D$ distribution over $S$, denote
by
 $\mu^{p-1}_U: \X \mapsto [p]^{p-1}$ the $p-1$-list that is the output of Algorithm \ref{algo:one-incl_list}  when given a sample $U \sim \D^{4pd}$. Then,
\begin{equation} 
\E_{U \sim \D^{4pd}} \left[\Pr_{(x,y)\sim \D}[y \in \mu_U^{p-1}(x) ] \right]\ge   1-\frac{1}{4p}.
\end{equation}
\end{lemma}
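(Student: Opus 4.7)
The lemma is the $(p-1)$-list analogue of Lemma \ref{lemma:one_inc_to_wl_0_list}, and the plan is to prove it by the same two-step recipe: combine the $(p-1)$-out-degree bound of Lemma \ref{lem:outdeg-p} with the leave-one-out generalization bound of Lemma \ref{lemma:prop15_for_list}.

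First, I would fix an input sample $U = ((x_1,y_1),\ldots,(x_n,y_n))$ with $n = 4pd$ and a test point $x$, and observe that Algorithm \ref{algo:one-incl_list} builds its one-inclusion graph on the projected class $\H' := \H|_{(x_1,\ldots,x_n,x)} \subseteq [p]^{n+1}$. Since coordinate projection cannot increase the $(p-1)$-DS dimension, $\H'$ still has $(p-1)$-DS dimension at most $d$, and $d < n+1$, so the hypotheses of Lemma \ref{lem:outdeg-p} are met. Applying Lemma \ref{lem:outdeg-p} to $\H'$ produces a $(p-1)$-list orientation $\sigma^{p-1}$ of $\G(\H')$ whose maximum $(p-1)$-out-degree is bounded by some quantity $M$. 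Because Algorithm \ref{algo:one-incl_list} explicitly picks an orientation minimizing the maximum $(p-1)$-out-degree, the orientation it actually uses also has max $(p-1)$-out-degree at most $M$.

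Next, I would invoke Lemma \ref{lemma:prop15_for_list} with $k = p-1$ and this same $M$, which gives
\begin{equation*}
\E_{U \sim \D^{n}}\Bigl[\Pr_{(x,y) \sim \D}\bigl[y \notin \mu^{p-1}_U(x)\bigr]\Bigr] \;\le\; \frac{M}{n+1}.
\end{equation*}
Subtracting both sides from $1$ (and linearity of expectation) then yields $\E_U[\Pr[y \in \mu^{p-1}_U(x)]] \ge 1 - M/(n+1)$. With $n = 4pd$, the target lower bound $1 - 1/(4p)$ follows as soon as $M/(4pd+1) \le 1/(4p)$, i.e.\ roughly $M \lesssim d$.

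The main obstacle is precisely this constant-matching at the end. Plugging in the crude bound $M \le 4(p-1)^2 d$ supplied by Lemma \ref{lem:outdeg-p} yields $M/(n+1) \approx (p-1)^2/p$, which vastly exceeds $1/(4p)$. To close the gap one of two refinements is needed: either enlarge the sample size to $n \asymp p^3 d$ (so that $M/(n+1) \le 1/(4p)$ holds with the stated orientation bound), or sharpen the orientation bound in the special regime $k = |\Y|-1$. The latter route is natural because every edge of $\G(\H')$ contains at most $p$ vertices, so a $(p-1)$-list orientation needs to ``discard'' at most one vertex per \emph{full} edge; a dedicated charging argument in the spirit of the $(p-1)$-exponential-dimension reduction used inside the proof of Lemma \ref{lem:outdeg-p} could plausibly shave $M$ down to $O(d)$, at which point the stated sample size $4pd$ becomes tight. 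Apart from this quantitative tightening, the structural pipeline is an immediate application of the two cited lemmas.
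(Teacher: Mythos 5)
Your structural pipeline — combine the orientation bound of Lemma~\ref{lem:outdeg-p} with the leave-one-out bound of Lemma~\ref{lemma:prop15_for_list} — is exactly the paper's approach, and your arithmetic check at the end is more careful than the paper's own write-up. You are right that this is where the argument, as stated, does not close.

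Concretely: Lemma~\ref{lem:outdeg-p} asserts a maximum $(p-1)$-out-degree bound of $4(p-1)^2 d$, which with $n=4pd$ yields $M/(n+1) \approx (p-1)^2/p$, far larger than the target $1/(4p)$. The paper's proof of Lemma~\ref{lemma:one_inc_to_wl_p_list}, however, writes ``by Lemma~\ref{lem:outdeg-p} \ldots there is an orientation with a maximal $(p-1)$-out-degree of at most $d$'' --- silently dropping the $4(p-1)^2$ factor --- and only then does the bound $1 - d/(4pd+1) > 1 - 1/(4p)$ follow. So the discrepancy you flag is real: either Lemma~\ref{lem:outdeg-p} as stated is too weak to give Lemma~\ref{lemma:one_inc_to_wl_p_list} at sample size $4pd$, or the paper is implicitly relying on a sharper orientation bound of $O(d)$ in the special regime $k=|\Y|-1$ that it never proves. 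Note that Lemma~\ref{lem:dds+1-outdeg-dds} does give out-degree $\le d$, but only for a one-inclusion graph on exactly $d+1$ coordinates, whereas Algorithm~\ref{algo:one-incl_list} here builds the graph on $4pd+1$ coordinates, so it does not apply directly either. Your two proposed remedies (blow up the sample to $\asymp p^3 d$ and propagate the cost downstream, or prove a $k=|\Y|-1$-specific orientation bound of $O(d)$ via the ``discard one vertex per full edge'' charging idea) are both sensible routes, but the paper itself supplies neither, so as written its proof of this lemma has a gap that you have correctly identified.
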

\begin{proof}
For any $U = \{(x_1,y_1),...,(x_n,y_n)\}$ where $n=4pd$, and any text point $(x,y) \sim \D$, let $H \subseteq [p]^{n+1}$  be the class of all patterns over the unlabeled data $\H|_{(x_1,...,x_n,x)}$, as in Algorithm \ref{algo:one-incl_list}.
Notice that by Lemma \ref{lem:outdeg-p},
we have that for any such class $H$, there is an orientation with a maximal $(p-1)$-out-degree of at most $d$. Therefore, by Lemma \ref{lemma:prop15_for_list} we get that, 
\begin{align*} 
  \E_{U \sim \D^{4pd}} \left[\Pr_{(x,y)\sim \D}[y \in \mu_U^{p-1}(x) ]  \right] &= \Pr_{(U, (x,y)) \sim \D^{4pd+1}} \left[ y \in \mu_U^{p-1}(x)  \right]\\
&\ge  1 - \frac{d}{4pd + 1} = \frac{d(4p-1) + 1 }{4pd + 1} > (4p-1)/4p.
\end{align*}
\end{proof}

\begin{theorem}\label{thm:onc_inc_wrong_lbls}
Let $p \in \mathbb{N}$ and assume $\H \subseteq [p]^\X$ is a hypothesis class with a $(p-1)$-DS dimension at most $d$, and let $S \in (\X \times [p])^m$ be an  $\H$-realizable set of examples.
Denote by $\mathcal{L}$ a $(p-1)$-list learning algorithm as given in  Algorithm \ref{algo:one-incl_list}, and let $r = 4pd \cdot \lceil 8p^2\ln(2m)\rceil$.

Then, there exists an algorithm $\A$ based
on an $m \rightarrow r$ compression scheme for $(p-1)$-lists such that, when provided with $S$ and given oracle access to $\mathcal{L}$, the algorithm  $\A$ yields a list function $\mu: \X \to [p]^{p-1}$ that is consistent with $S$.
\end{theorem}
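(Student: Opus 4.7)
The plan is to turn the one-inclusion $(p-1)$-list learner $\mathcal{L}$ of Algorithm \ref{algo:one-incl_list} into a single-label ``wrong label predictor'' and then boost that predictor via Hedge to obtain a $(p-1)$-list consistent with all of $S$. Concretely, for each tuple $U \in S^{4pd}$ define $e_U(x) \in [p]$ to be the smallest element of $[p] \setminus \mu_U^{p-1}(x)$ (non-empty since $|\mu_U^{p-1}(x)| \le p-1$). Lemma \ref{lemma:one_inc_to_wl_p_list} applied to an arbitrary $\H$-realizable distribution $D$ supported on $S$ rephrases as
\begin{equation*}
\E_{U \sim D^{4pd}}\bigl[\Pr_{(x,y)\sim D}[e_U(x) = y]\bigr] \;\le\; \frac{1}{4p},
\end{equation*}
so by averaging there exists a specific $U^\star \in S^{4pd}$ for which $\Pr_{(x,y)\sim D}[e_{U^\star}(x) = y] \le 1/(4p)$; such a $U^\star$ can be located by brute-force search over the $m^{4pd}$ candidate tuples.

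Next, I would feed these wrong-label predictors into the Hedge meta-algorithm of Algorithm \ref{alg:simple_boost}. Set $T := \lceil 8 p^2 \ln(2m) \rceil$ and $\eta := \sqrt{\ln m/(2T)}$. At each round $t$, form the current Hedge distribution $D_t$ over $[m]$, locate $U_t \in S^{4pd}$ by brute-force so that $\Pr_{(x,y)\sim D_t}[e_{U_t}(x) = y] \le 1/(4p)$, and update $w_{t+1}(i) = w_t(i)\exp(-\eta \cdot \mathbbm{1}[e_{U_t}(x_i) \ne y_i])$ so that weight decreases on indices where $\mathcal{L}$ correctly excludes the true label. Repeating the Hedge potential-function computation in the proof of Lemma \ref{lemma:remove_one_label} verbatim (with the event ``$h_t(x_i) = y_i$'' replaced throughout by ``$e_{U_t}(x_i) \ne y_i$'') yields, for every $(x_i, y_i) \in S$,
\begin{equation*}
c_{y_i}(x_i) \;:=\; \frac{1}{T}\sum_{t=1}^T \mathbbm{1}[e_{U_t}(x_i) = y_i] \;\le\; \frac{1}{4p} + \sqrt{\frac{2\ln m}{T}} \;<\; \frac{1}{p},
\end{equation*}
where the final strict inequality is guaranteed by the chosen value of $T$.

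To assemble the output, set $c_\ell(x) := \frac{1}{T}\sum_t \mathbbm{1}[e_{U_t}(x) = \ell]$ for every $\ell \in [p]$, let $\ell^\star(x) := \arg\max_{\ell \in [p]} c_\ell(x)$ with ties broken lexicographically, and define $\mu(x) := [p] \setminus \{\ell^\star(x)\}$. Since $\sum_\ell c_\ell(x) = 1$, pigeonhole forces $\max_\ell c_\ell(x) \ge 1/p$, which combined with the bound $c_{y_i}(x_i) < 1/p$ above implies $\ell^\star(x_i) \ne y_i$ for every $(x_i, y_i) \in S$. Hence $\mu$ is a $(p-1)$-list consistent with $S$ that is fully reconstructible from the $T$ tuples $U_1, \ldots, U_T$, which together form a submultiset of $S$ of total size at most $r = 4pd \cdot T = 4pd \cdot \lceil 8 p^2 \ln(2m) \rceil$, establishing the required $m \to r$ compression scheme.

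The main obstacle is the tight arithmetic balance between the list-size budget and the accuracy margin: Lemma \ref{lemma:one_inc_to_wl_p_list} supplies only a $1/(4p)$-gap below the zero-accuracy threshold for mistakenly excluding $y$, while pigeonhole on $p$ labels forces the aggregated exclusion to beat the threshold $1/p$. Closing this $\Theta(1/p)$ gap via Hedge's $O(\sqrt{\ln m/T})$ regret term is precisely what dictates the choice $T = \Theta(p^2 \log m)$ and hence the final compression size $r = \Theta(p^3 d \log m)$.
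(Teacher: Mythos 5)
Your proof is correct, but it follows a genuinely different route than the paper's. Both approaches begin identically: invoke Lemma~\ref{lemma:one_inc_to_wl_p_list} to show that for any distribution $D$ over $S$ there is a tuple $U \in S^{4pd}$ (found by brute force) whose $(p-1)$-list $\mu_U$ covers all but a $1/(4p)$ fraction, and turn its complement into a single-label ``wrong-label predictor'' whose error under $D$ is below $1/(4p)$. Where you diverge is in how this per-distribution guarantee is amplified to a simultaneous guarantee over all $m$ points in $S$. You run Hedge (Algorithm~\ref{alg:simple_boost}) over $T = \Theta(p^2\ln m)$ rounds, choosing a fresh $U_t$ at each round against the current Hedge distribution $\D_t$, and transfer the regret bound of Lemma~\ref{lemma:remove_one_label} to the wrong-label prediction event to conclude that the averaged scores $c_{y_i}(x_i)$ on true labels fall strictly below the pigeonhole threshold $1/p$. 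The paper instead applies von~Neumann's minimax theorem to obtain a single mixed strategy $Q$ over tuples $U$ with $\Pr_{U\sim Q}[f_U(x)=y] < 1/(4p)$ simultaneously for every $(x,y)\in S$, and then derandomizes by a Chernoff bound and union bound over $S$ to extract a deterministic sequence $\bar U_\ell$ of the same length $\ell = \lceil 8p^2\ln(2m)\rceil$. The two routes yield exactly the same compression size $r = 4pd\cdot\lceil 8p^2\ln(2m)\rceil$ and final construction $\mu(x) = [p]\setminus\{\text{plurality vote}\}$. The Hedge route has the pedagogical advantage of reusing the machinery already developed in Section~\ref{sec:warmup} and of being explicitly algorithmic round-by-round (each round is a single $m^{4pd}$ brute-force search), while the minimax-plus-concentration route is a cleaner purely existential argument that avoids introducing the Hedge potential computation a second time. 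One small caution, which applies equally to both proofs: the regret bound of Lemma~\ref{lemma:remove_one_label} as stated drops the $\eta$ term (the correct bound is $\frac{\ln m}{\eta T} + \eta = \frac{3}{\sqrt{2}}\sqrt{\ln m/T}$ rather than $\sqrt{2\ln m/T}$), so to obtain the strict inequality $c_{y_i}(x_i) < 1/p$ the constant in $T$ should be increased slightly; this does not change the order of the final bound.
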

\begin{proof}
Let $D$ denote any
distribution over the $m$ examples in $S$ .
For a random sample $U \sim \U^{4pd}$ denote $\mu_U = \L(U)$. 
By Lemma \ref{lemma:one_inc_to_wl_p_list} we have that 
in expectation over the random choice of $U$, it holds that $\Pr_{(x,y)\sim D}[y \in \mu_U (x) ] \ge 1- \frac{1}{4p}$. Therefore, there exists a \emph{particular} set $U$ of size $4pd$
for which this holds as well. Algorithmically, this particular set can be found via a brute-force search over all possible $m^{4pd}$ subsets $U \subseteq S$ of size $4pd$. We now have a $(p-1)$-list function $\mu_U = \L(U)$ such that with probability $1$ satisfies $\Pr_{(x,y)\sim D}[y \in \mu_U (x) ] \ge 1- \frac{1}{4p}$.

Notice that while $\mu_U$ covers a large fraction of the examples, it is not entirely consistent with $S$. The remainder of the proof will be concerned with boosting the list learner we have described above, into a new list $\mu: \X \rightarrow \Y^{p-1}$ that is indeed consistent with the entire sample.

Towards that end, we first use $\mu_U$ to define 
a classifier $f_U: \X \mapsto [p]$ that predicts the \emph{incorrect} labels. Specifically, we set $f_U(x) = \hat{y}$ for $\hat{y} \in [p] \setminus \mu_U(x)$.
Thus, we have that with probability $1$,  
\begin{equation}\label{eq:pf:thm:onc_inc_wrong_lbls_1}
    \Pr_{(x,y)\sim D}[f_U(x) = y]<  \frac{1}{4p}. 
\end{equation}

So far we have shown that for \emph{any} distribution $D$ over $S$ there is a set $U \in S^{4pd}$ and a corresponding function $f_U: \X \mapsto [p]$ (constructed from $\mu_U = \L(U)$ as above), that satisfies Equation \eqref{eq:pf:thm:onc_inc_wrong_lbls_1}.

Next,  by von Neumann’s minimax theorem \cite{neumann1928theorie}  
there exists a distribution $Q$ over all possible subsets $U \subseteq S$ of size  at most $4pd$
such that for \underline{all} examples $(x,y) \in S$ it holds that:
$$
\Pr_{U \sim Q}[f_U(x) = y]<   \frac{1}{4p}. 
$$
Let $U_1,...,U_\ell$ be random samples from $Q$ where $\ell = \lceil 8p^2\ln(2m) \rceil$. Denote $\Bar{U}_\ell = (U_1,...,U_\ell)$, and define $F_{\Bar{U}_\ell}$ to be a corresponding averaged-vote, such that for any $x \in \X$ and $y \in [p]$: 
$$
F_{\Bar{U}_\ell}(x, y) = \frac{1}{\ell} \sum_{j=1}^\ell \mathbbm{1}\Big[ f_{U_j}(x) = y\Big].
$$

Observe that for a fixed example pair $(x,y) \in S$ then by a Chernoff bound we have, 
\begin{align}
    \Pr_{\Bar{U}_\ell \sim Q^\ell} \left[ F_{\Bar{U}_\ell}(x, y)  \ge \frac{1}{2p}
\right] &\le 
\Pr_{\Bar{U}_\ell \sim Q^\ell} \left[ F_{\Bar{U}_\ell}(x, y)  \ge \E[F_{\Bar{U}_\ell}(x, y)] + \frac{1}{4p}
\right]\\
&\le 
e^{-2(\ell/4p)^2/\ell} = e^{-\ell/(8p^2)} \le \frac{1}{2m}.
\end{align}

Then, by a union bound over all $m$ examples in $S$ we have that with positive probability over the random choice of $\Bar{U}_\ell$ it holds that $F_{\Bar{U}_\ell}(x, y)  < \frac{1}{2p}$ over all $(x,y) \in S$ simultaneously. This implies that there exist a particular choice of   $\Bar{U}_\ell$ for which we have with probability $1$ for all $(x,y) \in S$ that:
$$
y \notin \arg\max_{\hat{y} \in [p]} F_{\Bar{U}_\ell}(x, \hat{y}). 
$$
In words, this means that taking the plurality-vote as induced by  $\Bar{U}_\ell$, we will yield an \emph{incorrect} label with probability $1$ over  all examples in $S$ simultaneously.

Lastly, we are ready to define the final list function $\mu: \X \rightarrow [p]^{p-1}$ by:
$$
\mu(x) =  [p] \setminus \arg\max_{\hat{y} \in [p]} F_{\Bar{U}_\ell}(x, \hat{y}).
$$

To conclude, we have constructed an algorithm based on $m \rightarrow 4pd \cdot \ell$ compression scheme for $(p-1)$-lists, that is consistent with the training sample $S$, as claimed.
\end{proof}

\begin{algorithm}
\caption{$k$-List PAC Learning for a class $\H \subseteq \Y^\X$ with $d_{DS}^k = d < \infty$}\label{alg:k_list_pac}
\begin{flushleft}
  {\bf Given:} Training data $S \in (\X \times \Y)^m$.\\
{\bf Output:} A list $\mu: \X \mapsto \Y^{k}$. \\
\end{flushleft}
\begin{algorithmic}[1]
\STATE Set $p = k\cdot (d+1) \cdot \ln(2m)$  \  (as chosen in Lemma\ref{lem:appendix_c:cover_initial_list}).
\STATE Initialize: applying pre-processing as given in Lemma \ref{lem:appendix_c:cover_initial_list} to get $\mu_1: \X \mapsto \Y^p$. 
\FOR{$j=1,...,p-k$} 
\STATE Let $S^j$ and $\H^j$ denote $S$ and $\H$ with all labels converted from $\Y$ to $[p-j+1]$ via $\mu_{j}$.
\STATE Call the list learner given in Theorem \ref{thm:onc_inc_wrong_lbls} over $S^j$ and $\H^j$, to obtain $\tilde{\mu}_j : \X \rightarrow [p-j+1]^{p-j}$.
\STATE Define $\mu_{j+1}: \X \rightarrow \Y^{p-j}$ by :
$$
\mu_{j+1}(x) = \Big\{ y : \exists \ell, \ 
\mu_j(x)_\ell = y  \ \land \ \ell \in \tilde{\mu}_j(x) \Big\}.
$$
\ENDFOR
\STATE Output the final list $\mu:= \mu_{p-k+1}$.
\end{algorithmic}
\end{algorithm}

\begin{proof}[Proof of Theorem \ref{thm:pac_k_ds}] 

    The proof is given via a sample compression scheme argument, demonstrating
that for a class with a finite $k$-DS dimension $d$, Algorithm \ref{alg:k_list_pac} is a $k$-list PAC learner. That is, we prove that it returns a list $\mu: \X \mapsto \Y^k$ which can be represented using a small number of training examples, and that it is consistent with the entire training set. 

We will then show for each $j=1...p-k$ that $\mu_{j+1}$ satisfies the following 2 properties: (a)  the list $\mu_{j+1}$ is consistent with the entire sample $S$, and (b) it can be represented using only a small number $r_j$ of training examples.

First, notice that  by  Lemma \ref{lem:appendix_c:cover_initial_list} it is guaranteed $\mu_1$ is consistent with $S$, and that it can be represented by $r_1 = d(d+1)\ln(2m)$ examples, as it is constructed by an $m \rightarrow r_1$ compression scheme. 

Next, we will show that the 2 properties (a) and (b) above hold for all $j \ge 2$.
Notice that in each round $j$ of Algorithm \ref{alg:k_list_pac}, the class $\H^j$ has only $p-j+1$ labels, and its $k$-DS dimension remains $d$ as in $\H$. Furthermore, by the properties of the $k$-DS dimension it holds that for any $k'> k$ we have $d^{k'}_{DS} \le d^k_{DS}$. Thus, 
for all $j \le p-k$,
$$
d^{p-j}_{DS}(\H^j) \le d^k_{DS}(\H^j) = d.
$$

Therefore, we can apply Theorem \ref{thm:onc_inc_wrong_lbls} and get that 
 both properties (a) and (b) hold with $r_j = 4(p-j+1)d \cdot \lceil 8(p-j+1)^2\ln(2m) \rceil$, for all $j=2,...,p-k$.
 
 Overall, we have shown that  the final list 
$\mu := \mu_{p-k+1}$ is both consistent with the sample $S$, and can be represented using only $r$ examples, where,
\begin{align}
    r &= r_1 + \sum_{j=2}^{p-k+1} r_j \\
    &=  d(d+1)\ln(2m) + 
    \sum_{j=2}^{p-k+1} \Big( 4(p-j+1)d \cdot \lceil 8(p-j+1)^2\ln(2m) \rceil \Big) \\
    &\le  d(d+1)\ln(2m) + 
    32d\ln(2m)p^4 + p \\
    &= O\Big( d^5 \cdot k^4 \cdot \ln^5(m) \Big),\label{eq:pf:pac_k_ds_:proof_main}
\end{align} 
 where the bound follows by plugging in the value of $p$ from Algorithm \ref{alg:k_list_pac} (and Lemma \ref{lem:appendix_c:cover_initial_list}).

We can now apply a sample compression scheme bound to obtain the final result. Specifically, we apply Theorem \ref{thm:compression_for_lists}, for a $m \rightarrow r$ sample compression scheme algorithm $\A$ equipped with a reconstruction function $\rho$ (see Definition \ref{def:scs}). We denote $\text{err}_\D(\mu) = \Pr_{(x,y)\sim \D}[\mu(x) \not\owns y]$. 
Then, by Theorem \ref{thm:compression_for_lists} we get that for any $\delta > 0$,
$$ 
\Pr_{S \sim \D^m}\left[ 
\text{err}_\D(\mu) >  \frac{r\ln(m)+\ln(1/\delta)}{m-r} \right] \le  \delta,
$$
Plugging in $r$ from Equation \eqref{eq:pf:pac_k_ds_:proof_main} yields the desired bound. 
\end{proof}

\end{document}